\def\BibTeX{{\rm B\kern-.05em{\sc i\kern-.025em b}\kern-.08em
    T\kern-.1667em\lower.7ex\hbox{E}\kern-.125emX}}
\newcommand{\yes}{\CIRCLE}
\newcommand{\no}{\Circle}
\newtheorem{theorem}{Theorem}
\newtheorem{definition}{Definition}
\newtheorem{corollary}{Corollary}
\begin{document}

\title{Differentially-Private Decision Trees and Provable Robustness to Data Poisoning}


\author{\IEEEauthorblockN{Dani\"el Vos, Jelle Vos, Tianyu Li, Zekeriya Erkin and Sicco Verwer} \IEEEauthorblockA{Delft University of Technology\\
Email: d.a.vos@tudelft.nl, j.v.vos@tudelft.nl, tianyu.li@tudelft.nl, z.erkin@tudelft.nl, s.e.verwer@tudelft.nl}}

\maketitle

\begin{abstract}
Decision trees are interpretable models that are well-suited to non-linear learning problems. Much work has been done on extending decision tree learning algorithms with differential privacy, a system that guarantees the privacy of samples within the training data. However, current state-of-the-art algorithms for this purpose sacrifice much utility for a small privacy benefit. These solutions create random decision nodes that reduce decision tree accuracy or spend an excessive share of the privacy budget on labeling leaves. Moreover, many works do not support continuous features or leak information about them. We propose a new method called PrivaTree based on private histograms that chooses good splits while consuming a small privacy budget. The resulting trees provide a significantly better privacy-utility trade-off and accept mixed numerical and categorical data without leaking information about numerical features. Finally, while it is notoriously hard to give robustness guarantees against data poisoning attacks, we demonstrate bounds for the expected accuracy and success rates of backdoor attacks against differentially-private learners. By leveraging the better privacy-utility trade-off of PrivaTree we are able to train decision trees with significantly better robustness against backdoor attacks compared to regular decision trees and with meaningful theoretical guarantees.
\end{abstract}

\begin{IEEEkeywords}
decision trees, differential privacy, poisoning attacks
\end{IEEEkeywords}

\section{Introduction}
Machine learning has achieved widespread success with neural networks and ensemble methods, but it is almost impossible for humans to understand the decisions such models make~\cite{DBLP:journals/queue/Lipton18}. Fortunately, much work has been done on training machine learning models that are directly interpretable by humans~\cite{rudin2019stop}. Especially size-limited decision trees~\cite{breiman1984cart,quinlan1986induction} are successful methods for their interpretability combined with their ability to predict non-linear data.

While decision trees can offer interpretability, they reveal information about the data they were trained on.
This is a detrimental property when models are trained on private data that contains sensitive information, such as in fraud detection and medical applications.
Differentially-private~\cite{DBLP:conf/icalp/Dwork06} machine learning models solve this problem by introducing carefully crafted randomness into the way the models are trained~\cite{DBLP:conf/ccs/AbadiCGMMT016}.
For differentially-private decision trees, the entire model consisting of decision node splits and leaf labels can be made public, and by extension, predictions made by the model. This is not only useful for training interpretable private models, but decision trees are also vital primitives for building ensembles~\cite{breiman2001random,friedman2002stochastic,chen2016xgboost,NIPS2017_6449f44a} for tabular data. The key problem in training differentially-private models is efficiently spending the privacy budget $\epsilon$ to achieve high utility. In this work, we propose an algorithm for training decision trees with an improved privacy-utility trade-off.

Many previous works have already proposed ways to generate differentially-private decision trees, but they also have shortcomings. There are two main categories of algorithms here. The first category~\cite{DBLP:journals/corr/BojarskiCCL14,DBLP:journals/eswa/FletcherI17,DBLP:journals/corr/abs-1907-02444,DBLP:journals/tdp/JagannathanPW12} chooses splits completely at random and allocates the entire privacy budget for labeling the leaves. The second category~\cite{DBLP:conf/pods/BlumDMN05,DBLP:phd/au/Borhan18,DBLP:conf/ausdm/Fletcher015,DBLP:conf/kdd/FriedmanS10} extends the greedy splitting procedure of regular decision trees, where splits are selected by optimizing a splitting criterion. These works guarantee differential privacy by incorporating noise into the splitting criterion while consuming a part of the user-defined privacy budget. However, naive approaches require computing many scores resulting in consuming privacy budget frequently and using up much budget to select good decision nodes. The remaining budget is spent on labeling leaves.

In this work, we propose a method called PrivaTree to train differentially-private decision trees. PrivaTree uses the privacy budget much more efficiently than previous work when choosing splits by leveraging private histograms. We also propose a strategy for distributing the privacy budget that offers good performance for both very small and large datasets. The result is a practical method for training private trees with a significantly better utility. PrivaTree also prevents leakage from the location of splits on numerical features, a property that some previous methods do not have. Moreover, we demonstrate how the ability to train accurate differentially-private decision trees with small privacy budgets allows for performance guarantees against data poisoning attacks in which an adversary manipulates the training data. Our experiments on various tabular benchmark datasets demonstrate an improved privacy-utility trade-off compared to other private trees. We also experimentally demonstrate that PrivaTree offers stronger poisoning robustness guarantees than private logistic regression~\cite{chaudhuri2011differentially}, another interpretable method. Our experiments on the MNIST 0 vs 1 dataset show that indeed PrivaTrees with small privacy budgets resist a trigger-based backdoor attack.

\begin{figure*}[tb]
     \centering
     \begin{subfigure}[b]{0.495\textwidth}
         \centering
         \includegraphics[width=\textwidth]{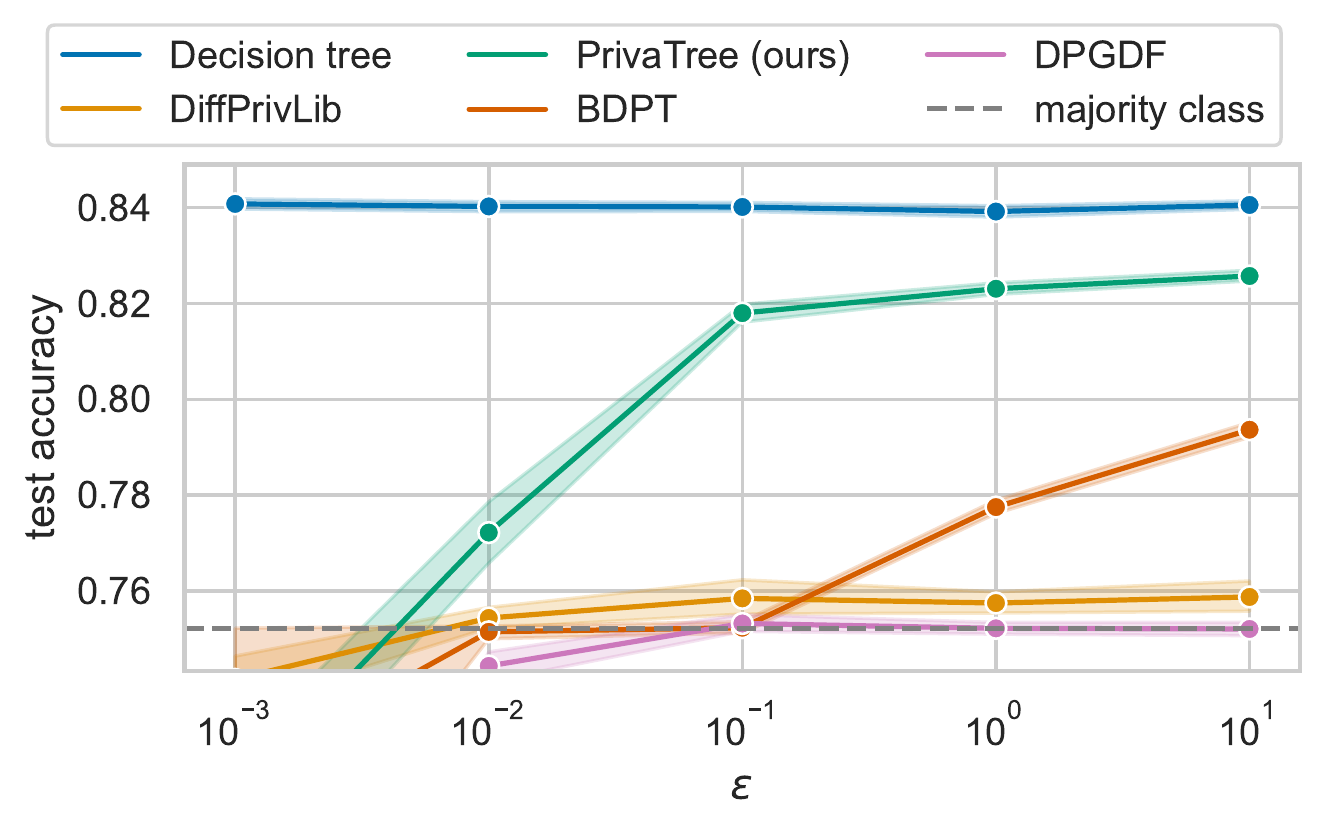}
         \caption{\textit{adult} dataset (45222 samples, 14 features)}
     \end{subfigure}
     \hfill
     \begin{subfigure}[b]{0.495\textwidth}
         \centering
         \includegraphics[width=\textwidth]{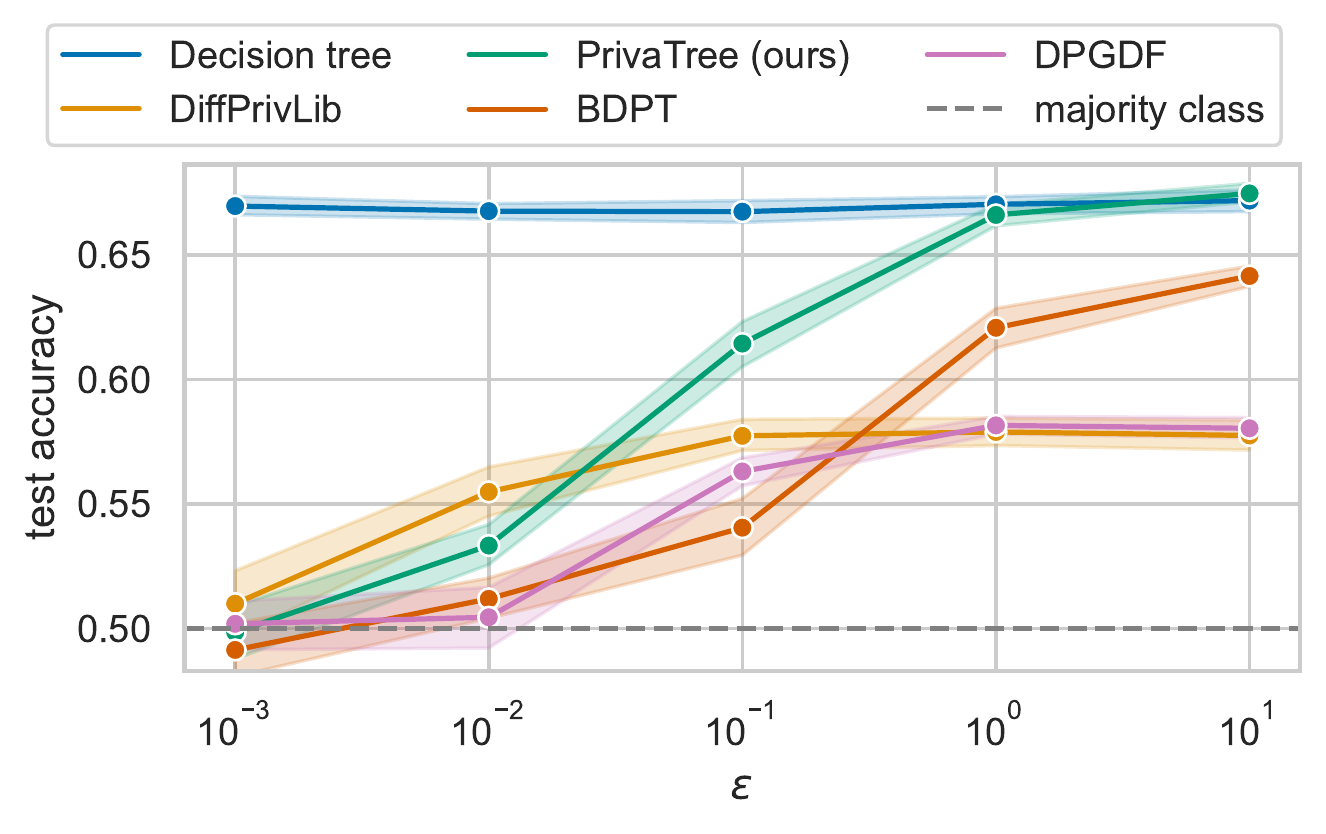}
         \caption{\textit{compas-two-years} (4966 samples, 11 features)}
         \label{fig:pol-epsilon-shares}
     \end{subfigure}
        \caption{Mean accuracy scores of depth 4 trees when varying privacy budget $\epsilon$ from private to less private with 50 repetitions. For extremely small privacy budgets DiffPrivLib performs best but for higher budgets, PrivaTree achieves a significantly higher accuracy.}
        \label{fig:varying-epsilon}
\end{figure*}

\section{Preliminaries}
\subsection{Decision tree learning}
Decision trees are simple models that consist of nodes that apply logical rules to a sample, and leaves that hold a prediction. By simply following a path through the rules one ends up in a leaf of which the value is predicted. Decision trees have become a popular choice of model due to their straight-forward interpretation when limiting the size of the tree~\cite{DBLP:journals/queue/Lipton18} and their success in more complex ensembles~\cite{breiman2001random,friedman2002stochastic,chen2016xgboost,NIPS2017_6449f44a,grinsztajn2022tree}. While it is debatable what exact tree size maintains interpretability, we choose to train trees up to a depth of 4 resulting in at most 16 leaves.

The most popular algorithms for learning decision trees are based on CART~\cite{breiman1984cart} and ID3~\cite{quinlan1986induction}. For classification tasks, these algorithms recursively create decision nodes that minimize Gini impurity or maximize information gain and create leaves labeled with the majority of labels that reach them. While these methods are greedy heuristics and thus offer few guarantees~\cite{kearns1996boosting}, they perform well in practice. Due to the success of decision trees in gradient boosting, much effort has gone into implementing efficient algorithms using histograms~\cite{chen2016xgboost,NIPS2017_6449f44a}. We base PrivaTree on such histogram-based learners, but where the choice of histograms in boosting was motivated by runtime efficiency we leverage them to achieve a better trade-off between privacy and accuracy.

\subsection{Differential privacy}

Differential privacy \cite{DBLP:conf/icalp/Dwork06,DBLP:journals/fttcs/DworkR14,DBLP:conf/tcc/DworkMNS06} provides strong privacy guarantees for algorithms over aggregate datasets, which implies that the existence of any record in the dataset does not influence the output probability with factors $\epsilon$ and $\delta$. This property prevents membership attacks, attacks aimed at determining whether specific samples were included in the train set of a model, with a high probability if $\epsilon$ is chosen small enough.

\begin{definition}[Differential privacy]
    A randomized algorithm $\mathcal{M}$ satisfies $(\epsilon,\delta)$-differential privacy if for all neighboring datasets $\mathcal{D},\mathcal{D}^\prime \in \mathbb{N}^{|\mathcal{X}|}$ differing in one element, and any $\mathcal{S}\subseteq \mathrm{Range}(\mathcal{M})$,
\begin{equation}
    \Pr[\mathcal{M}(\mathcal{D}) \in \mathcal{S}] \leq e^{\epsilon} \Pr[\mathcal{M}(\mathcal{D}^\prime) \in \mathcal{S}] + \delta 
\end{equation}
    where $\mathbb{N}$ is the set of non-negative integers and $\mathcal{X}$ is the universe for all datasets. If $\delta$ is $0$, $\mathcal{M}$ satisfies $\epsilon$-differential privacy.
\end{definition}

Among differentially private mechanisms, the Laplace mechanism adds noise to a numerical output,
and the exponential mechanism returns a precise output among a group according to the utility scores for each element. Both mechanisms are widely used and are defined as follows.

\begin{definition}[Laplace mechanism \cite{DBLP:journals/fttcs/DworkR14}]
    A randomized algorithm $\mathcal{M}$ satisfies $\epsilon$-differential privacy over a real value query $f:\mathbb{N}^{|\mathcal{X}|} \rightarrow \mathbb{R}^k$ if
\begin{equation}
    \mathcal{M}(\mathcal{D},\ f,\ \epsilon) = f(\mathcal{D}) + (y_1 \ \dots \ y_k)\ ,  \ \ \ \ \ y_i \sim \mathrm{Lap}\left(\frac{\Delta f}{\epsilon}\right) 
\end{equation}
where $\mathrm{Lap}(b)$ is the Laplace distribution with scale $b$ that $\mathrm{Lap}(x\ |\ b) = \frac{1}{2b}\exp(-\frac{|x|}{b})$, and $\Delta f$ is the ${l}_1$-sensitivity that
\begin{equation}
    \Delta f = \max \limits_{X , X' \in \mathbb{N}^{|\mathcal{X}|},  ||X-X'||_1 \leq 1} ||f(X) - f(X')||_1 
\end{equation}
\end{definition}

With the Laplace mechanism, the Laplace noise is added to the accurate output of the query $f$ so that the output of $\mathcal{M}$ satisfies $\epsilon$-differential privacy over the query. Other mechanisms, such as the Gaussian mechanism and geometric mechanism~\cite{DBLP:journals/siamcomp/GhoshRS12}, achieve differential privacy in a similar way. The geometric mechanism is similar to the Laplace mechanism but works with integer values. An algorithm $\mathcal{M}(\mathcal{D},\ f,\ \epsilon)$ is $\epsilon$-differentially private when the noise follows the two-sided geometric distribution:
\begin{equation}
    \Pr[y_i=\delta]=\frac{1-\epsilon}{1+\epsilon}\epsilon^{|\delta|}
\end{equation}
with a query $f$, the parameter $\epsilon \in (0,1)$ and every integer $\delta$.

\begin{definition}[Exponential mechanism \cite{DBLP:conf/focs/McSherryT07}]
    A randomized algorithm $\mathcal{M}$ satisfies $\epsilon$-differential privacy over a utility function $u:\mathbb{N}^{|\mathcal{X}|}\times \mathcal{R}\rightarrow \mathbb{R}$ if $\mathcal{M}$ selects an element $t\in \mathcal{R}$ with the probability that
\begin{equation}
\begin{aligned}
     \Pr[\mathcal{M} & (\mathcal{D}, u, \epsilon, \mathcal{R}) = t \in \mathcal{R}] = \\
    &\frac{\exp{(\epsilon u(\mathcal{D},t)/(2\Delta u)}) \cdot \mu (t)}{\sum _{r\in\mathcal{R}} \exp{((\epsilon u(\mathcal{D},r)/(2\Delta u))} \cdot \mu (r)}
\end{aligned}
\end{equation}
    where $\Delta u$ is the sensitivity that 
    \begin{equation}
        \Delta u = \max \limits_{r \in \mathcal{R}} \max \limits_{X, X' \in \mathbb{N}^{|\mathcal{X}|}, ||X-X'||_1 \leq 1} |u(X,r) - u(X',r)|
    \end{equation}
    where $\mathcal{R}$ is the range for output, and $\mu$ is a measure over $\mathcal{R}$.
\end{definition}

In contrast to the Laplace mechanism, the exponential mechanism assigns a probability to each possible output in the group according to the utility score. By doing that, the mechanism can output a precise element using the probabilities, and perturbation is added for the selection procedure. Similarly, permute-and-flip~\cite{mckenna2020permute} randomly chooses a value from a set of options, weighed by a utility score and the privacy parameter $\epsilon$. For each possible output, the mechanism simulates flipping a biased coin, and the item is returned if the head is up with a probability according to an exponential function. Otherwise, it flips the coin for the next item with new probabilities assigned. Compared to the exponential mechanism, the probability of outputting an item is updated for each round, and the authors of~\cite{mckenna2020permute} also show that the permute-and-flip mechanism never performs worse than the exponential mechanism in expectation and better in other situations.

Meanwhile, differential privacy holds sequential and parallel composition properties \cite{DBLP:conf/sigmod/McSherry09} as shown in Theorems \ref{theorem:seq} and \ref{theorem:par}. For a series of mechanisms $\mathcal{M}_{[k]}=(\mathcal{M}_1,\dots,\mathcal{M}_k)$, each $\mathcal{M}_i$ is $\epsilon_i$-differentially private for $i \in [k]$. The sequential composition indicates that $(\sum _i\epsilon_i)$-differential privacy is guaranteed if the series of mechanisms are applied sequentially to the input, and the parallel composition implies that $(\max _i\epsilon_i)$-differential privacy is achieved if the series of mechanisms are applied to different disjoint subsets of the input.

\begin{theorem}[Sequential composition]
    If mechanism $\mathcal{M}_i$ is $\epsilon_i$-differentially private, the sequence of $\mathcal{M}_{[k]}(X)$ provides $(\sum _i\epsilon_i)$-differential privacy.
    \label{theorem:seq}
\end{theorem}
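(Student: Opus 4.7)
The plan is to prove the statement by induction on $k$, where the base case $k=1$ is trivial and the inductive step uses the chain rule of probability together with the $\epsilon_i$-DP guarantee of each individual mechanism. I will allow each $\mathcal{M}_i$ to depend adaptively on the outputs of $\mathcal{M}_1,\dots,\mathcal{M}_{i-1}$, since this is the standard (and strongest) form of sequential composition; the non-adaptive version follows immediately as a special case.

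For the base case $k=1$, the statement is just the definition of $\epsilon_1$-differential privacy. For the inductive step, fix neighboring datasets $\mathcal{D},\mathcal{D}'$ with $\|\mathcal{D}-\mathcal{D}'\|_1\le 1$, and fix an arbitrary output tuple $(s_1,\dots,s_k)\in\mathrm{Range}(\mathcal{M}_{[k]})$. I would write the joint density (or probability mass) of the composed mechanism as
\begin{equation}
\Pr[\mathcal{M}_{[k]}(\mathcal{D})=(s_1,\dots,s_k)] = \prod_{i=1}^{k}\Pr[\mathcal{M}_i(\mathcal{D})=s_i \mid s_1,\dots,s_{i-1}]
\end{equation}
by the chain rule, and similarly for $\mathcal{D}'$. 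Conditioned on the prefix $(s_1,\dots,s_{i-1})$, the $i$-th mechanism is a fixed $\epsilon_i$-DP mechanism on the input dataset, so by definition each conditional ratio is bounded by $e^{\epsilon_i}$. Multiplying the $k$ ratios yields the bound $e^{\sum_i \epsilon_i}$ on the ratio of joint probabilities, and integrating (or summing) over any measurable set $\mathcal{S}\subseteq\mathrm{Range}(\mathcal{M}_{[k]})$ gives the required inequality $\Pr[\mathcal{M}_{[k]}(\mathcal{D})\in\mathcal{S}]\le e^{\sum_i\epsilon_i}\Pr[\mathcal{M}_{[k]}(\mathcal{D}')\in\mathcal{S}]$.

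The main subtlety, and the part I would be most careful about, is justifying the chain-rule factorization when the mechanisms are adaptive: each $\mathcal{M}_i$ must be viewed as a family of mechanisms indexed by the transcript $(s_1,\dots,s_{i-1})$, and the $\epsilon_i$-DP property must hold uniformly over this indexing so that the pointwise ratio bound is legitimate. Once this is set up cleanly, the argument reduces to a pointwise multiplicative bound followed by integration, and no further machinery is needed since we are in the pure-$\epsilon$ (not $(\epsilon,\delta)$) regime where no failure probabilities have to be tracked.
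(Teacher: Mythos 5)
The paper does not prove this theorem at all: it is stated as a known result and attributed to McSherry's composition paper \cite{DBLP:conf/sigmod/McSherry09}, so there is no in-paper argument to compare against. Your chain-rule proof is the standard one and is correct, including the care you take with adaptivity; the only cosmetic remarks are that your ``inductive step'' is really a direct product argument that never invokes the induction hypothesis, and that passing from the set-based DP definition to the pointwise ratio on densities deserves the one-line justification (singletons in the discrete case, an almost-everywhere bound on Radon--Nikodym derivatives in the continuous case) that you already gesture at.
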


\begin{theorem}[Parallel composition]
    If mechanism $\mathcal{M}_i$ is $\epsilon_i$-differentially private, and $\mathcal{D}_i$ are disjoint subsets of the input domain $\mathcal{D}$,
    the sequence of $\mathcal{M}_{[k]}(X\cap \mathcal{D}_i)$ provides $(\max _i\epsilon_i)$-differential privacy.
        \label{theorem:par}
\end{theorem}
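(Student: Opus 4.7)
The plan is to exploit the disjointness of the subsets $\mathcal{D}_i$ to reduce the statement to the $\epsilon_j$-differential privacy of a single mechanism $\mathcal{M}_j$. First I fix two neighboring inputs $X, X' \in \mathbb{N}^{|\mathcal{X}|}$ that differ in exactly one record $x$. Because the $\mathcal{D}_i$ are pairwise disjoint, $x$ lies in at most one of them; call it $\mathcal{D}_j$ (if $x$ lies in none of them, the joint output distribution is unchanged under swapping $X$ for $X'$ and the claim is trivial). For every $i \neq j$ the intersections satisfy $X \cap \mathcal{D}_i = X' \cap \mathcal{D}_i$, so $\mathcal{M}_i(X \cap \mathcal{D}_i)$ and $\mathcal{M}_i(X' \cap \mathcal{D}_i)$ have identical marginal distributions, while $X \cap \mathcal{D}_j$ and $X' \cap \mathcal{D}_j$ form a neighboring pair on which we may invoke $\epsilon_j$-DP of $\mathcal{M}_j$.

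Next I use independence of the mechanisms to factor the joint law of the sequence. On a product event $S = S_1 \times \cdots \times S_k$ this yields
\begin{equation}
\Pr[\mathcal{M}_{[k]}(X) \in S] = \prod_{i=1}^{k} \Pr[\mathcal{M}_i(X \cap \mathcal{D}_i) \in S_i].
\end{equation}
All factors with $i \neq j$ agree on $X$ and $X'$, and the $j$-th factor is bounded by $e^{\epsilon_j}$ times the corresponding factor for $X'$, by $\epsilon_j$-differential privacy of $\mathcal{M}_j$ applied to the neighboring inputs $X \cap \mathcal{D}_j$ and $X' \cap \mathcal{D}_j$. Multiplying out produces the single multiplicative factor $e^{\epsilon_j} \leq e^{\max_i \epsilon_i}$ on rectangles, which is the desired bound.

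To extend from product events to an arbitrary measurable $S \subseteq \prod_i \mathrm{Range}(\mathcal{M}_i)$, I would condition on the outputs of the mechanisms $\mathcal{M}_i$ for $i \neq j$, apply the single-mechanism DP bound fiberwise to the conditional distribution of $\mathcal{M}_j$, and integrate the conditioning back out. Equivalently one can appeal to the fact that a product measure is determined by its values on rectangles and close under a standard monotone-class argument.

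The main obstacle is pedagogical rather than technical: one must be explicit that $\mathcal{M}_{[k]}$ denotes independent invocations on disjoint inputs, so that the joint distribution genuinely factors, and one must handle the extension from rectangles to all measurable events with some care. Once that bookkeeping is in place, the argument collapses to a one-line invocation of $\epsilon_j$-DP on the single coordinate whose input is affected by the differing record $x$, and taking the worst case over which $\mathcal{D}_j$ contains $x$ yields the stated $(\max_i \epsilon_i)$-bound.
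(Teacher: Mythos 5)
The paper states this result as background, citing McSherry's composition theorems, and gives no proof of its own, so there is nothing to compare against directly. Your argument is the standard and correct one: the single differing record lies in at most one of the disjoint subsets $\mathcal{D}_j$, so only the $j$-th mechanism sees a neighboring input while all other marginals are unchanged, and the fiberwise (or monotone-class) extension from product events to general measurable $S$ is routine; taking the worst case over $j$ gives the $e^{\max_i \epsilon_i}$ factor.
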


The privacy parameter $\epsilon$ is often referred to as the privacy budget as it can be intuitively interpreted this way. Following the intuition a private algorithm has a total budget of $\epsilon$ that it can spend on composed operations. Operations that get little budget will return noisier results than operations with much budget so by composing operations in parallel when possible and by carefully distributing the budget over the operations one can achieve a good trade-off between privacy and utility.

\subsection{Poisoning robustness}

Data poisoning attacks are attacks in which a malicious actor adds, removes, or modifies the data that a machine learning model is trained on. By making specific changes to the dataset, attackers can for example reduce the model's performance or plant a backdoor that allows the attacker to inject a specific 'trigger pattern' into an input of the model at test time to control its prediction. While much work has gone into defending against poisoning attacks \cite{jagielski2018manipulating,chen2018detecting,wong2018provable} and into provable robustness against other attacks such as evasion \cite{raghunathan2018certified,lecuyer2019certified} it is notoriously difficult to provide \textit{provable guarantees} on poisoning defenses. In this work, we demonstrate the provable poisoning robustness of differentially-private decision trees. We discuss existing methods for certified poisoning robustness in section \ref{sec:related-work-poisoning}.

\begingroup
\begin{table}[tb]

\setlength{\tabcolsep}{5pt}

\centering
\caption{Overview of methods for training differentially private decision trees, algorithms marked with * use smooth sensitivity. Most methods use the exponential mechanism $\mathcal{M}_{EM}$ for splitting and $\mathcal{M}_\text{Laplace}$ for labeling leaves. Methods without splitting a mechanism use random trees.}
\label{tab:related-work}

\begin{tabular}{@{}lcc|cc|cc@{}}
\toprule
\multicolumn{3}{c|}{\textbf{Method}} & \multicolumn{2}{c|}{\textbf{Features}} & \multicolumn{2}{c}{\textbf{Mechanism}} \\
Name & Year & Ref & Cat. & Num. & Splitting & Labeling \\ \midrule
SuLQ ID3 & 2005 & \cite{DBLP:conf/pods/BlumDMN05} & \yes & \no & $\mathcal{M}_\text{Gaussian}$ & $\mathcal{M}_\text{Gaussian}$ \\
Private-RDT & 2009 & \cite{DBLP:journals/tdp/JagannathanPW12} & \yes & \yes & - & $\mathcal{M}_\text{Laplace}$ \\
SuLQ-based ID3 & 2010 & \cite{DBLP:conf/kdd/FriedmanS10} & \yes & \yes & $\mathcal{M}_\text{Laplace}$ & $\mathcal{M}_\text{Laplace}$ \\
DiffPID3 & 2010 & \cite{DBLP:conf/kdd/FriedmanS10} & \yes & \yes & $\mathcal{M}_{EM}$ & $\mathcal{M}_\text{Laplace}$ \\
DiffGen & 2011 & \cite{DBLP:conf/kdd/MohammedCFY11} & \yes & \yes & $\mathcal{M}_{EM}$ & $\mathcal{M}_\text{Laplace}$ \\
DT-Diff & 2013 & \cite{DBLP:conf/trustcom/ZhuXXZ13} & \yes & \yes & $\mathcal{M}_{EM}$ & $\mathcal{M}_\text{Laplace}$ \\
dpRFMV/dpRFTA & 2014 & \cite{DBLP:journals/corr/BojarskiCCL14} & \yes & \yes & - & $\mathcal{M}_\text{Laplace}$ \\
DPDF & 2015 & \cite{DBLP:conf/ausdm/Fletcher015} & \yes & \no & $\mathcal{M}_{EM}$ & $\mathcal{M}_\text{Laplace}$ \\
Rana et al. & 2015 & \cite{DBLP:conf/icdm/RanaGV15} & \yes & \yes & $\mathcal{M}_\text{Laplace}$ & $\mathcal{M}_\text{Laplace}$ \\
Smooth Random & 2017 & \cite{DBLP:journals/eswa/FletcherI17} & \yes & \yes & - & $\mathcal{M}_{EM}$* \\
ADiffP & 2018 & \cite{DBLP:phd/au/Borhan18} & \yes & \no & $\mathcal{M}_{EM}$ & $\mathcal{M}_\text{Laplace}$ \\
DPGDF & 2019 & \cite{DBLP:conf/icassp/XinY0H19} & \yes & \no & $\mathcal{M}_{EM}$ & $\mathcal{M}_{EM}$* \\
BDPT & 2020 & \cite{DBLP:journals/compsec/GuanSSWD20} & \yes & \yes & $\mathcal{M}_{EM}$* & $\mathcal{M}_\text{Laplace}$ \\
TrainSingleTree & 2020 & \cite{DBLP:conf/aaai/LiWWH20} & \no & \yes & $\mathcal{M}_{EM}$ & $\mathcal{M}_\text{Laplace}$ \\ 
DiffPrivLib & 2021 & \cite{DBLP:journals/corr/abs-1907-02444} & \no & \yes & - & $\mathcal{M}_{PF}$ \\
PrivaTree & \multicolumn{2}{c|}{\emph{This work}} & \yes & \yes & $\mathcal{M}_\text{Geometric}$ & $\mathcal{M}_{PF}$ \\
\bottomrule
\end{tabular}
\end{table}
\endgroup

\section{Related work}

\subsection{Differentially-private decision tree learning}
Many previous works already propose algorithms for training differentially-private decision trees.
These algorithms address the privacy leakage in regular trees by replacing the node splitting and leaf labeling operations with differentially-private alternatives. This paper only considers algorithms for single decision trees. 
Fletcher and Islam~\cite{DBLP:journals/csur/FletcherI19} wrote a survey on this topic which also examines ensembles such as in private boosting~\cite{asadi2022private}.

Table~\ref{tab:related-work} provides a summary of existing algorithms for training private decision trees.
In the `features' column, we indicate whether the algorithm considers categorical and numerical features. We remark that algorithms for numerical splits can also support categories using one-hot encoding, and algorithms for categories can heuristically support numerical features by applying binning. Note that unless computed using differential privacy, the resulting bins reveal information about the training data. In that case, the model only guarantees differential privacy for the leaves, which is equivalent to labelDP~\cite{DBLP:conf/nips/GhaziGKMZ21}. The mechanism columns of the table indicate the private mechanisms used for node splitting and leaf labeling. Besides the specific choice of mechanism, the way these mechanisms are sequentially applied and the way the privacy budget is distributed have a significant effect on the algorithm's performance.

There are two main categories of algorithms for training differentially-private trees. The first category trains random decision trees, that replace split searching by splitting uniformly at random from the domain of possible feature values. A benefit of doing so is that splitting does not depend on the data and does not consume any privacy budget so labeling can be performed with the comeplete budget. However, random splits do not necessarily produce good leaves as having worse splits leads to leaves that contain a mix of samples from all classes. As a result, accurate labels will still cause misclassifications. For certain datasets, the poor quality of random splits strongly affects the performance of the resulting tree. For this reason, random decision trees are almost exclusively used in ensembles. Examples of such algorithms are Private-RDT~\cite{DBLP:journals/tdp/JagannathanPW12}, dpRFMV/dpRFTA~\cite{DBLP:journals/corr/BojarskiCCL14}, Smooth Random Trees~\cite{DBLP:journals/eswa/FletcherI17} and the implementation of DiffPrivLib trees~\cite{DBLP:journals/corr/abs-1907-02444}.

The second category consists of algorithms that train a greedy tree by probabilistically choosing a split weighed by a scoring function such as the information gain or the Gini impurity. SuLQ ID3~\cite{DBLP:conf/pods/BlumDMN05} and SuLQ-based ID3~\cite{DBLP:conf/kdd/FriedmanS10} do so by adding Gaussian or Laplace noise to the scores themselves, while works like DiffPID3~\cite{DBLP:conf/kdd/FriedmanS10}, DiffGen~\cite{DBLP:conf/kdd/MohammedCFY11}, DT-Diff~\cite{DBLP:conf/trustcom/ZhuXXZ13} and TrainSingleTree~\cite{DBLP:conf/aaai/LiWWH20} do so not by perturbing the scores, but using the exponential mechanism so the privacy budget does not have to be divided over so many queries. DPDF~\cite{DBLP:conf/ausdm/Fletcher015} further increases the utility of the queries by bounding the sensitivity of the Gini impurity, while ADiffP~\cite{DBLP:phd/au/Borhan18} dynamically allocates the privacy budget.

We compare against three of the latest algorithms for training private trees. BDPT~\cite{DBLP:journals/compsec/GuanSSWD20} is a greedy tree algorithm that uses the exponential mechanism for splitting but with smooth sensitivity, allowing for a higher utility per query as compared to previous works. DPGDF~\cite{DBLP:conf/icassp/XinY0H19} is a similar algorithm that uses smooth sensitivity for creating the leaves rather than the splits. This algorithm only support categorical features. Finally, DiffPrivLib~\cite{DBLP:journals/corr/abs-1907-02444} offers a recent implementation of random trees. For the leaves it uses the permute-and-flip mechanism, which performs better in practice than the exponential mechanism~\cite{mckenna2020permute}. This algorithm only supports numerical features.

\subsection{Provable poisoning robustness} \label{sec:related-work-poisoning}

While it is difficult to provide strong guarantees for poisoning robustness, several works have done so in the past. Steinhardt et al.~\cite{steinhardt2017certified} remove outliers based on metrics such as distance from the centroid of a class and can formally guarantee the accuracy of convex learners under specific poisoning attacks. Rosenfeld et al.~\cite{rosenfeld2020certified} consider label-flipping attacks in which only the labels can be changed by the attacker. They certify robust prediction of specific test samples for linear models by giving a bound on the minimum number of training labels that have to be flipped to change the test sample's prediction. Deep Partition Aggregation (DPA)~\cite{levine2021deep} trains an ensemble on disjoint subsets of the training dataset such that each sample is seen by only one member of the ensemble. Then, for a specific test sample, one can compute a bound on the number of train samples that should be perturbed to flip the prediction. These previous methods do not apply to single decision tree models.

Several works have guaranteed robustness using differential privacy. Ma et al.~\cite{ma2019data} prove general bounds on the attack cost for varying numbers of poisoned samples and apply this analysis to logistic regression. Also, several works have used differentially-private neural networks to be more robust against poisoning attacks~\cite{hong2020effectiveness,geiping2020witches,borgnia2021dp}, sometimes even demonstrating robustness in configurations in which the theoretical guarantees do not apply. Compared to DPA, differential privacy guarantees robustness of a global metric such as accuracy while methods like DPA guarantee robust predictions for each test sample individually.

While multiple works have considered the poisoning robustness of differentially-private learners, to the best of our knowledge there is no demonstration of strong robustness guarantees for single decision trees. Drews et al.~\cite{drews2020proving} do analyze the robustness of regular non-private decision tree learners but show that for trees of depth 4 only 4 samples are often enough to reduce the accuracy guarantees to 0. As we will demonstrate, private decision trees with a good privacy-utility trade-off can guarantee poisoning robustness and outperform logistic regression in this regard. For the sake of completeness, we also compare against DPA~\cite{levine2021deep} which uses decision trees inside its ensemble to offer strong guarantees, but sacrifices interpretability and privacy.

\begin{figure*}[tb]
     \centering
     \begin{subfigure}[b]{0.45\textwidth}
         \centering
         \includegraphics[width=\textwidth]{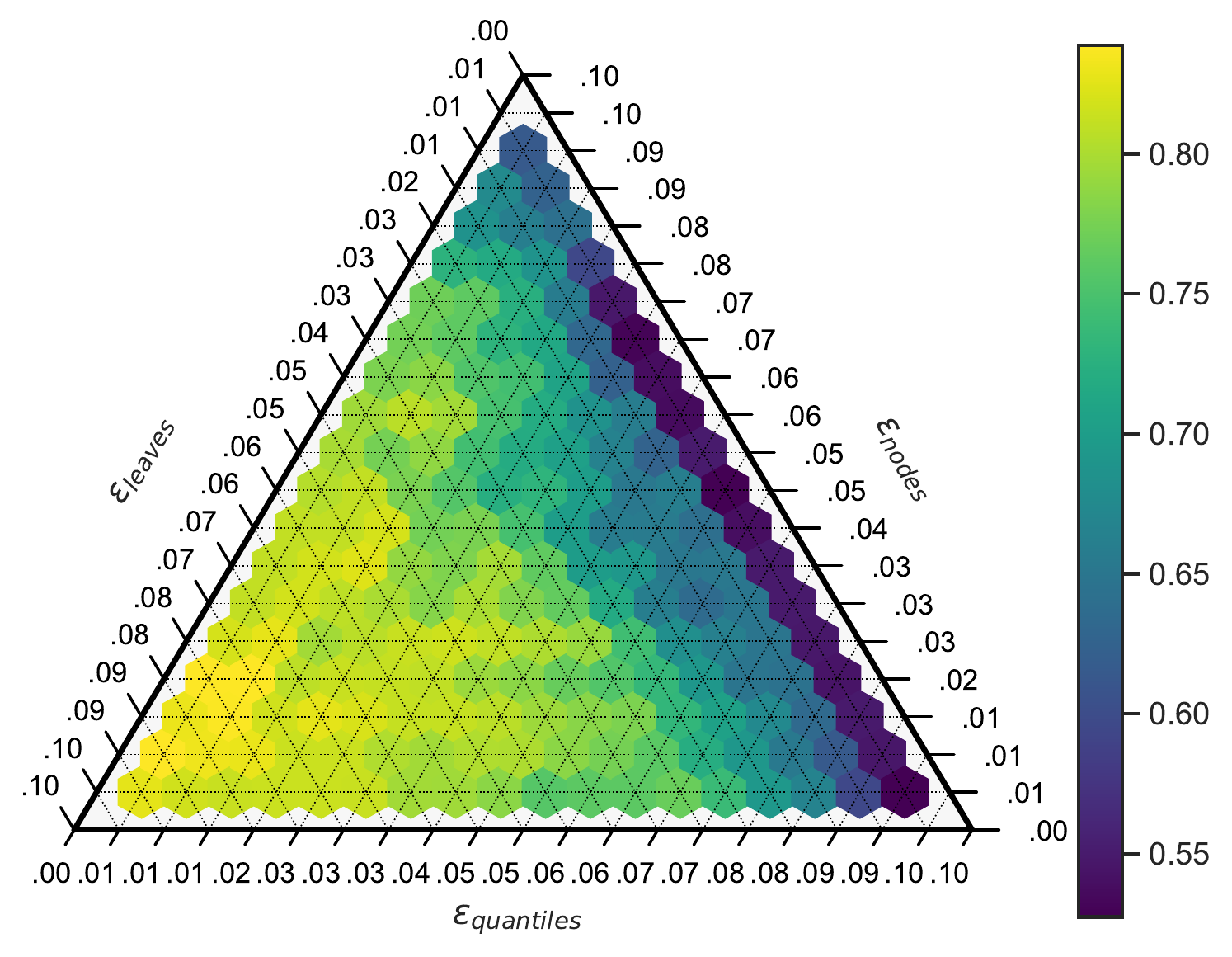}
         \caption{\textit{vote} dataset (435 samples, 16 features)}
     \end{subfigure}
     \hfill
     \begin{subfigure}[b]{0.45\textwidth}
         \centering
         \includegraphics[width=\textwidth]{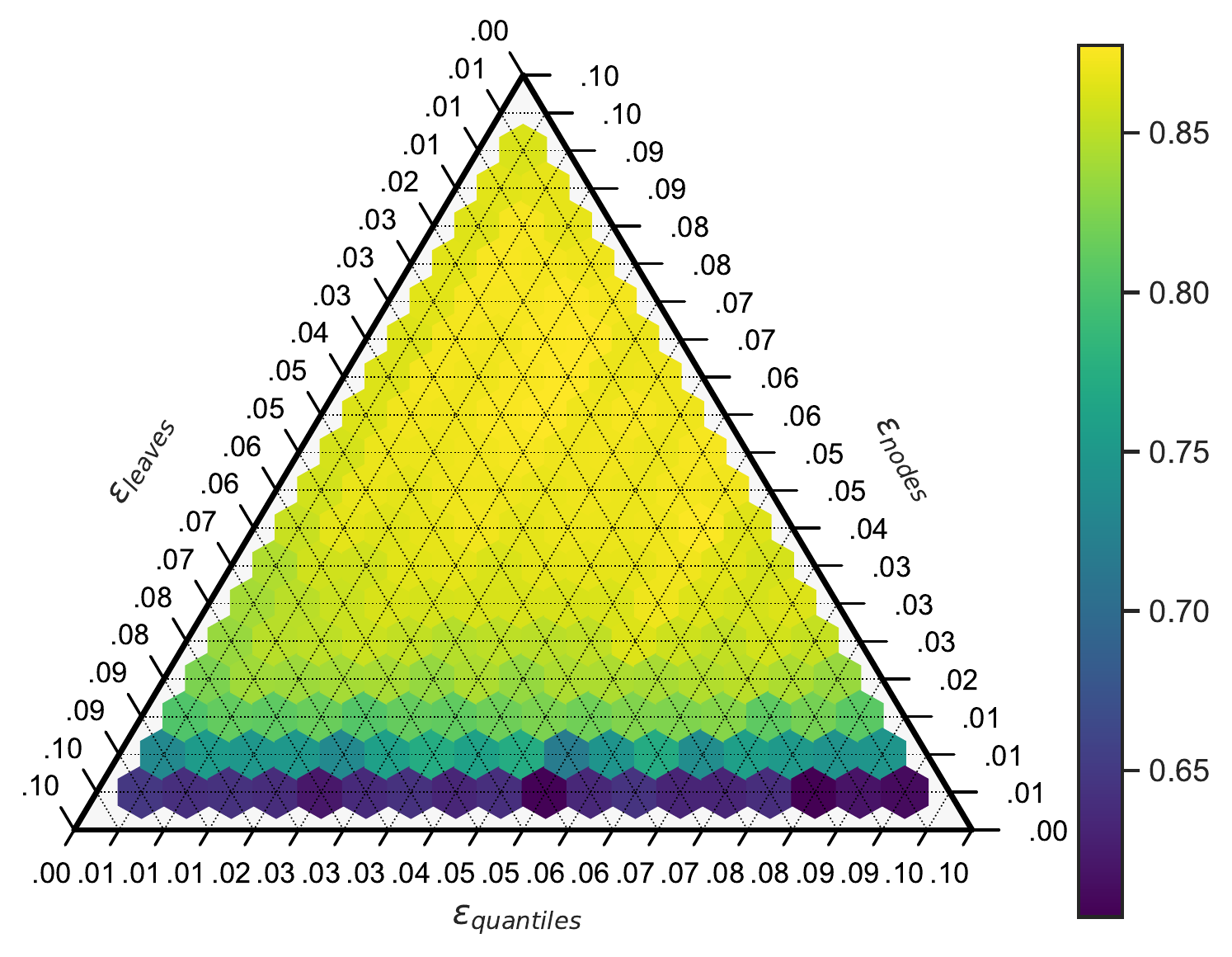}
         \caption{\textit{pol} dataset (10082 samples, 26 features)}
     \end{subfigure}
        \caption{Training accuracy scores when varying the distribution of the privacy budget $\epsilon = 0.1$ over different parts of the PrivaTree algorithm with depth 3. Scores were averaged over 50 executions. On small datasets such as \textit{vote}, the best trees allocate more budget to the leaves, while large datasets such as \textit{pol} benefit from more budget for splitting.
        }
        \label{fig:epsilon-shares}
\end{figure*}

\section{Improving differentially private decision trees: PrivaTree}
In this section, we present PrivaTree, which is an algorithm for training differentially-private decision trees with high utility.
PrivaTree incorporates three techniques to improve performance:
\begin{itemize}
    \item Private histograms to find good splits using the Gini impurity with reduced spending of privacy budget.
    \item A better distribution of privacy budget based on a bound for labeling leaves accurately.
    \item The permute-and-flip mechanism instead of the exponential mechanism for leaf labeling and the Geometric instead of the Laplace mechanism for node splitting.
\end{itemize}
Additionally, PrivaTree uses a pre-processing technique based on private quantiles which enables it to train on both numerical and categorical features. PrivaTree assumes that the range of numerical features, the set of categorical values, the set of class labels, and the dataset size are publicly known. We provide 
pseudocode in Algorithm~\ref{alg:private-tree}, which we describe in more detail in the rest of this section.

\subsection{Scoring splits using private histograms}
Classification tree learning algorithms such as CART create decision nodes by recursively choosing a split among all feature values that minimizes the weighted Gini impurity. Previous differentially-private decision trees used similar approaches but often leak information about numerical feature values. Friedman and Schuster~\cite{DBLP:conf/kdd/FriedmanS10} show how to privately split on numerical features using the continuous exponential mechanism but this requires more privacy budget because each feature needs to be considered separately. Another work, BDPT, permits some leakage, providing a weaker form of privacy by averaging every 5 numerical feature values, and then splitting using the exponential mechanism.

\paragraph{Numerical features} To find high-quality splits while protecting feature value information and efficiently using the privacy budget, we use private histograms. Splitting according to histograms has been a successful progression in decision tree learning for gradient boosting ensembles~\cite{chen2016xgboost,NIPS2017_6449f44a} where it is used for its runtime efficiency. Instead, we rely on them because they only require noise to be added once per bin (parallel composition), allowing us to only add noise once for each node. Specifically, we add noise with the Geometric mechanism over the Laplace mechanism since histogram counts are integers and we use privacy budget $\epsilon_\text{node,num}$ which we define later. 
In the rest of this paper, we fix the number of bins for numerical features to 10. To find a split, the PrivaTree algorithm computes a private histogram for each class and for each feature, computes the Gini impurity of the split between every bin, and selects the split that minimizes impurity. This results in decision node rules of the kind `feature value $\leq$ threshold'.

\paragraph{Categorical features}
Previous decision tree learning algorithms support categorical features by only splitting one category at a time. While this method is sound, it often requires deep trees to make enough splits for categorical features to be useful, and this harms interpretability. In the PrivaTree algorithm, we find a locally optimal partition of the categories instead. Such rules are of the kind `categories $L$ go left, categories $R$ go right'. As a result of Coppersmith et al.~\cite{coppersmith1999partitioning}, this partition is efficiently identified for binary classification by first sorting the categories by their ratio of class 0 versus class 1 members followed by the typical splitting procedure. To guarantee differential privacy we perform the sort operation based on the private histogram with privacy budget $\epsilon_\text{node,cat}$. For the multiclass case, we simply keep the order of categories as supplied by the user.

\subsection{Pre-processing by private quantiles}
Before the splitting procedure, PrivaTree needs to select the boundaries of the 10 bins of the private histograms for each numerical feature.
A natural choice is to create equal-width bins of numerical features based on the public knowledge of each feature's range, but there is a problem with this approach: features with long tails would cause data to be concentrated in a few bins and would result in a large loss of information. Instead, we bin numerical features using quantiles so that they evenly divide the samples over each bin.
Since regular quantiles leak information about the dataset, we resort to differentially-private quantiles. We use the \textit{jointexp} algorithm~\cite{gillenwater2021differentially} for this, which improves performance over optimal statistical estimation techniques~\cite{smith2011privacy} when computing multiple quantiles on the same data. This algorithm runs with privacy budget $\epsilon_\text{quantile}$ which we define later.
Categorical variables require no pre-processing. Instead, we encode them as integers and the split-finding operation handles these values natively. 

\subsection{Leaf labeling according to majority votes}
Once the PrivaTree algorithm has produced a series of decision nodes by recursively splitting and reaches the stopping criterion, the algorithm creates a leaf containing a prediction. In the non-private setting, the prediction is usually chosen to be the majority of the class labels of samples that reach that leaf to maximize accuracy. However, this leaks private information. Previous works have used the Laplace or (smooth) exponential mechanisms but like modern implementations such as DiffPrivLib~\cite{DBLP:journals/corr/abs-1907-02444} we opt for the permute-and-flip mechanism~\cite{mckenna2020permute}. 
Permute-and-flip is proven to have an expected error that is at least as good or better than that of the exponential mechanism and practically outperforms it. To label a leaf we therefore count the number of samples of each class and apply permute-and-flip with privacy budget $\epsilon_\text{leaf}$ which we define later.

\begin{algorithm*}[tb]
    \caption{Train PrivaTree with $\epsilon$ differential privacy}\label{alg:private-tree}
    \textbf{Input:} dataset $X$ ($n \times m$), labels $y$, privacy budget $\epsilon$, maximum leaf error $\mathcal{E}_{max}$, maximum depth $d$
    \begin{algorithmic}[1]
        \State $\epsilon_\text{leaf} = \min(\frac{\epsilon}{2}, \epsilon'_\text{leaf})$, where $\epsilon'_\text{leaf}$ is computed with Equation \ref{eq:eps-leaf}
        \State $\epsilon_\text{node,num} = \epsilon_\text{quantiles} = (\epsilon - \epsilon_\text{leaf})\cdot \frac{1}{1 + d}$, \quad $\epsilon_\text{node,cat} = (\epsilon - \epsilon_\text{leaf})\cdot \frac{1}{d}$
        \For{numerical features $j_\text{num}$}
            \State bin values $X_{i,j_\text{num}}$ into bins $B_{j_\text{num}}$ determined by quantiles with \textit{jointexp}~\cite{gillenwater2021differentially}\Comment{with budget $\epsilon_\text{quantiles}$}
        \EndFor
        \Procedure{FitTree}{$X', y', d'$}
            \If{$d'$ = $0$ \textbf{or} $|X'_{i,j}| \leq 1$ \textbf{or} $y'$ contains a single class}
                \State compute class counts $N_0, N_1, ..., N_K$ for all $K$ classes
                \State \textbf{return} $\text{Leaf}(\mathcal{M}_{PF}(\langle N_0, N_1, ..., N_K \rangle))$ \Comment{with budget $\epsilon_\text{leaf}$}
            \Else
                \State Create histograms $\forall j, k, b {\in} B_j : H_{j, b, k} \gets \mathcal{M}_\text{Geometric}(\sum_i [X'_{i,j}{=}b \land y'_i{=}k])$ \Comment{with budget $\epsilon_\text{node,num}$ or $\epsilon_\text{node,cat}$}
                \State find split $(j^*, b^*)$ that minimizes Gini impurity w.r.t. histograms $H_{j, b, k}$
                \State partition $X'$ into $(X^\text{left}, X^\text{right})$, and $y'$ into $(y^\text{left}, y^\text{right})$ according to $(j^*, b^*)$
                \State $\mathcal{T}_\text{left} \gets$ \textsc{FitTree}($X^{\text{left}}, y^{\text{left}}, d' {-} 1$), \quad $\mathcal{T}_\text{right} \gets$ \textsc{FitTree}($X^{\text{right}}, y^{\text{right}}, d' {-} 1$)
                \State \textbf{return} Node$(j^*, b^*, \mathcal{T}_\text{left}, \mathcal{T}_\text{right})$
            \EndIf
        \EndProcedure
        \State \textbf{return} \textsc{FitTree}($X, y, d$)
    \end{algorithmic}
\end{algorithm*}

\subsection{Distributing the privacy budget}
The composability property of differential privacy allows modular algorithm design by breaking up the algorithm into differentially-private primitives. However, it is generally not obvious how to distribute the privacy budget $\epsilon$ over the primitives to maximize the expected utility of the outcomes. Previous works have tried various heuristics such as distributing $\epsilon$ equally over each private operation or distributing epsilon 50-50 between node and leaf operations.
To understand the role of budget distribution in private tree learning we visualized the average training accuracy over 50 runs when varying budget spent on each part of the algorithm (quantile finding, node splitting and leaf labeling) in Figure \ref{fig:epsilon-shares}.
We find that when $\epsilon$ is large compared to the dataset size this results in excess budget being spent on leaf labeling where it could have improved node selection.
By noticing that we can bound the expected error incurred by labeling leaves for a given privacy budget, we propose a budget distribution scheme that scales well for varying values of $\epsilon$. When the privacy budget is low compared to the dataset size, set $\epsilon_\text{leaf} = \frac{\epsilon}{2}$, i.e. half of the budget. When the budget is relatively high, set $\epsilon_\text{leaf}$ such that the maximum expected error incurred by labeling $\mathbb{E}[\mathcal{E}(\mathcal{M}, \vec{N})]$ is at most equal to the user-specified labeling error limit $\mathcal{E}_{max}$. Distribute the remaining budget $\epsilon - \epsilon_\text{leaf}$ uniformly over quantile and node operations to improve algorithm utility for higher values of $\epsilon$. For all our results we used  $\mathcal{E}_{max} = 0.01$, i.e. if possible only allow an extra accuracy loss of 1\% due to leaf labeling.

\begin{corollary} \label{cor:max-leaf-error}
For $K$ classes, $n$ samples and depth $d$ trees, the amount of privacy budget $\epsilon'_\text{leaf}$ needed for labeling leaves with the permute-and-flip mechanism with expected error $\mathbb{E}[\mathcal{E}(\mathcal{M}_{PF}, \vec{N})]$ of at most $\mathcal{E}_{max}$ is:
\begin{equation} \label{eq:eps-leaf}
    \epsilon'_\mathrm{leaf} \leq \frac{2^d \max_p 2 \log(\frac{1}{p}) \left( 1 - \frac{1 - (1 - p)^K}{K p} \right)}{n \; \mathcal{E}_{max}}\;,
\end{equation}
\end{corollary}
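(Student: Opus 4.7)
The plan is to bound the expected misclassifications introduced by permute-and-flip at a single leaf, combine the leaves of a depth-$d$ tree via parallel composition, and invert the resulting inequality for the leaf budget $\epsilon_\text{leaf}$ in terms of the target labeling error $\mathcal{E}_{max}$.

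First I would fix a leaf with class counts $\vec{N}=(N_1,\dots,N_K)$ and argue that the expected excess misclassifications is maximised by the symmetric configuration in which one class attains the majority count $N_{\max}$ and the remaining $K-1$ classes all sit at $N_{\max}-q$ for a common gap $q>0$. Under this configuration, permute-and-flip with sensitivity $1$ and budget $\epsilon_\text{leaf}$ always accepts the optimal class, and accepts any sub-optimal class with probability $p := \exp(-\epsilon_\text{leaf}\, q/2)$. Conditioning on the position $j\in\{1,\dots,K\}$ of the optimal class in the uniformly random permutation (probability $1/K$ each), the mechanism returns a sub-optimal class exactly when one of the $j-1$ preceding sub-optimals passes its coin-flip, an event of probability $1-(1-p)^{j-1}$. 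Averaging over $j$,
\[
\Pr[\text{sub-optimal returned}] \;=\; \frac{1}{K}\sum_{j=1}^{K}\bigl(1-(1-p)^{j-1}\bigr) \;=\; 1-\frac{1-(1-p)^K}{Kp}.
\]
Multiplying by the gap $q$ and substituting $q=(2/\epsilon_\text{leaf})\log(1/p)$ bounds one leaf's expected error by $(2/\epsilon_\text{leaf})\log(1/p)\bigl(1-(1-(1-p)^K)/(Kp)\bigr)$, and taking a supremum over $p\in(0,1)$ produces an input-independent bound.

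Next I would combine the leaves. A depth-$d$ tree has at most $2^d$ leaves, and the samples reaching distinct leaves are disjoint, so parallel composition (Theorem~\ref{theorem:par}) allows every leaf to be labelled with the full budget $\epsilon_\text{leaf}$ while the expected excess-misclassification counts simply add. Dividing the sum by the public size $n$ to obtain a fraction of the training set and demanding this be at most $\mathcal{E}_{max}$ yields
\[
\frac{2^d}{n}\cdot\frac{2}{\epsilon_\text{leaf}}\,\max_{p\in(0,1)} \log\!\tfrac{1}{p}\,\Bigl(1-\tfrac{1-(1-p)^K}{Kp}\Bigr) \;\leq\; \mathcal{E}_{max},
\]
and solving for $\epsilon_\text{leaf}$ reproduces exactly the expression claimed in~\eqref{eq:eps-leaf}.

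The main technical step is the per-leaf worst-case identification: I would need to show that any asymmetric spreading of the sub-optimal mass across the $K-1$ non-majority classes can only shrink the expected permute-and-flip error relative to the symmetric ``one optimal versus $K-1$ equal sub-optimals'' configuration, so that a single scalar $p$ suffices to parametrise the worst case. A secondary and benign looseness is that bounding every leaf simultaneously by the same $p$-maximiser over-estimates the labeling error when many leaves are nearly pure, but this slack only makes the resulting $\epsilon'_\text{leaf}$ a safe upper bound on the budget truly required, which is what the corollary asserts.
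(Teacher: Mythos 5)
Your proposal follows essentially the same route as the paper: bound the per-leaf expected permute-and-flip error by the worst case over a single parameter $p$, multiply by the $2^d$ leaves (whose samples are disjoint, so each is labelled with the full $\epsilon_\text{leaf}$ under parallel composition), divide by $n$, and invert for $\epsilon'_\text{leaf}$. The one step you flag as the main technical burden --- that the symmetric ``one optimal versus $K-1$ equal sub-optimals'' configuration is the worst case, so a single scalar $p$ parametrises it --- is precisely Proposition~4 of the permute-and-flip paper~\cite{mckenna2020permute}, which the paper's proof simply cites (together with the resulting closed-form error) rather than re-deriving as you sketch.
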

\begin{proof}
This result naturally follows from applying the worst-case error bound proved for $\mathcal{M}_{PF}$ in the permute-and-flip paper~\cite{mckenna2020permute}. A complete proof is given in the appendix.
\end{proof}

Following the definition of $\epsilon'_\mathrm{leaf}$ this results in the privacy budget being distributed as
\begin{align} \label{eq:epsilon-definitions}
    \epsilon_\text{leaf} &= \min\left(\frac{\epsilon}{2}, \epsilon'_\text{leaf}\right), \\ \epsilon_\text{node,num} = \epsilon_\text{quantiles} &= \frac{\epsilon - \epsilon_\text{leaf}}{1 + d}, \\
    \epsilon_\text{node,cat} &= \frac{\epsilon - \epsilon_\text{leaf}}{d}.
\end{align}

Next, we show that this budget distribution scheme actually provides differential privacy by showing that the composition of mechanisms spends privacy budget equal to $\epsilon$.
\begin{theorem}
    PrivaTree, as described in Algorithm \ref{alg:private-tree}, provides $\epsilon$-differential privacy.
\end{theorem}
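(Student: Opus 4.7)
The plan is to decompose PrivaTree into its three independently differentially-private primitives---the \textit{jointexp} quantile pre-processor, the Geometric-noise histogram computed at each internal node, and the permute-and-flip leaf labeler---and then combine them with sequential composition (Theorem~\ref{theorem:seq}) and parallel composition (Theorem~\ref{theorem:par}) so that the total privacy loss telescopes to exactly $\epsilon$. The pre-processing step runs \textit{jointexp} once on the whole dataset with budget $\epsilon_\text{quantiles}$ and is $\epsilon_\text{quantiles}$-DP by~\cite{gillenwater2021differentially}; all subsequent binning of numerical features is then post-processing of the public bin edges.

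Inside \textsc{FitTree}, the first sub-claim is that a single internal node release is $\epsilon_\text{node,num}$-DP on the numerical side and $\epsilon_\text{node,cat}$-DP on the categorical side. For any one feature, the bins together with the class label partition the samples reaching that node, so applying the Geometric mechanism to each class-bin count is exactly parallel composition of unit-sensitivity queries over disjoint groups, giving that feature's histogram the stated guarantee; the chosen split $(j^*, b^*)$ is then a deterministic post-processing of the noisy counts. Lifting from one node to a whole depth, at any fixed depth $d'$ the internal nodes partition the training set into disjoint subsets (each sample follows a unique root-to-leaf path), so Theorem~\ref{theorem:par} makes the cost of the entire depth equal to the per-node cost. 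The same parallel argument collapses all leaf permute-and-flip calls at depth $d$ into a single cost of $\epsilon_\text{leaf}$.

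Finally, I compose across layers sequentially: the quantile pre-processor, the $d$ depths of internal splits, and the leaf layer. Using the budget distribution specified in the algorithm, for a numerical-only dataset the total is $\epsilon_\text{quantiles} + d\,\epsilon_\text{node,num} + \epsilon_\text{leaf} = (1{+}d)\tfrac{\epsilon-\epsilon_\text{leaf}}{1+d} + \epsilon_\text{leaf} = \epsilon$, and for a categorical-only dataset $d\,\epsilon_\text{node,cat} + \epsilon_\text{leaf} = \epsilon$ analogously.

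The main obstacle I expect is the per-node accounting when numerical and categorical features coexist at a node. The two Geometric-histogram releases operate on the \emph{same} sample subset, so across feature types they compose sequentially rather than in parallel, and one must verify that the pair $(\epsilon_\text{node,num}, \epsilon_\text{node,cat})$ chosen by the budget scheme, combined with $\epsilon_\text{quantiles}$ over $d$ layers and $\epsilon_\text{leaf}$, still sums to $\epsilon$. A secondary subtlety is the data-dependent stopping conditions in \textsc{FitTree} beyond the public $d'{=}0$ test ($|X'_{i,j}|\le 1$ and ``$y'$ is pure''): these must be argued to be harmless, either as post-processing of already-noisy releases or by replacing them with purely public rules. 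Everything else reduces to a straightforward induction on depth using the two composition theorems.
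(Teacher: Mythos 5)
Your decomposition and accounting match the paper's proof almost step for step: \emph{jointexp} quantiles at cost $\epsilon_\text{quantiles}$, parallel composition across the disjoint sample subsets at each depth so that the $d$ levels of splits cost $d\cdot\epsilon_\text{node,num}$ (resp.\ $d\cdot\epsilon_\text{node,cat}$) by sequential composition across levels, one parallel charge of $\epsilon_\text{leaf}$ for all leaves, and the same telescoping sums $\epsilon_\text{quantiles}+d\cdot\epsilon_\text{node,num}+\epsilon_\text{leaf}=\epsilon$ and $d\cdot\epsilon_\text{node,cat}+\epsilon_\text{leaf}=\epsilon$.

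The one place you genuinely diverge is the step you flag as your ``main obstacle,'' and there your instinct would sink the proof rather than complete it. If the numerical and categorical histograms at a node composed \emph{sequentially}, a mixed dataset would cost $\epsilon_\text{quantiles}+d\,(\epsilon_\text{node,num}+\epsilon_\text{node,cat})+\epsilon_\text{leaf}$, which strictly exceeds $\epsilon$ under the stated budget split, so no amount of ``verifying the sums'' rescues that route. The paper instead closes the argument by treating the numerical attributes and the categorical attributes as disjoint subsets of the input and invoking parallel composition (Theorem~\ref{theorem:par}) once more, so the mixed case costs $\max(\epsilon_\text{num},\epsilon_\text{cat})=\epsilon$. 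That column-wise partition is the missing idea in your plan; without it (or a substitute argument) your proof establishes the theorem only for datasets that are purely numerical or purely categorical. You would be entitled to push back on whether Theorem~\ref{theorem:par}, stated for partitions of \emph{records}, licenses a partition of \emph{attributes} when a neighboring dataset changes a whole record and hence perturbs both feature groups at once --- but that is the step the paper relies on. Your secondary concern about the data-dependent stopping tests ($|X'_{i,j}|\le 1$ and label purity) is legitimate and is not addressed in the paper's proof either; it is a gap in the paper rather than in your plan.
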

\begin{proof}
    For numerical attributes in the training set $X$, PrivaTree first computes the quantiles with $\epsilon_\text{quantiles}$. This can be done with parallel composition for each feature. After that, the algorithm recursively splits the root node and since each split creates two distinct partitions of the data, each data point is only used in $d$ nodes where $d$ is the maximum depth of the tree. The amount of budget spent on numerical node splitting is then $d \cdot \epsilon_\text{node,num}$ through sequential composition. Finally, leaf labeling consumes $\epsilon_\text{leaf}$ of the privacy budget and since each data point is only used in a single leaf this follows parallel composition (spending $\epsilon_\text{leaf}$ once). Combining all the operations we find that the amount of privacy budget spent for numerical features is:
    \begin{align*}
        \epsilon_\text{num} &= \epsilon_\text{quantiles}+d\cdot \epsilon_\text{node,num}+\epsilon_\text{leaf} \\
        &= \frac{\epsilon - \epsilon_\text{leaf}}{1 + d} + d \cdot \frac{\epsilon - \epsilon_\text{leaf}}{1 + d} + \epsilon_\text{leaf} \\
        &= \frac{(1 + d) (\epsilon - \epsilon_\text{leaf})}{1 + d} + \epsilon_\text{leaf} \\
        &= \epsilon - \epsilon_\text{leaf} + \epsilon_\text{leaf} \\
        &= \epsilon.
    \end{align*}

    For categorical attributes in $X$, there is no need to calculate the quantiles. Therefore categorical feature node splitting takes privacy budget $d \cdot \epsilon_\text{node,cat}$, and leaf labeling takes budget $\epsilon_\text{leaf}$. The overall privacy parameter for categorical attributes follows similarly to that of numerical attributes:
    \begin{align*}
        \epsilon_\text{cat} &= d \cdot \epsilon_\text{node,cat} + \epsilon_\text{leaf} \\
        &= d \cdot \frac{\epsilon - \epsilon_\text{leaf}}{d} + \epsilon_\text{leaf} \\
        &= \epsilon - \epsilon_\text{leaf} + \epsilon_\text{leaf} \\
        &= \epsilon.
    \end{align*}
    Since numerical and categorical attributes are disjoint subsets of the input dataset $X$ they follow parallel composition and therefore Algorithm \ref{alg:private-tree} provides $\max(\epsilon_\text{num},\epsilon_\text{cat}) = \max(\epsilon, \epsilon) = \epsilon$-differential privacy. 
\end{proof}

\section{Poisoning Robustness} \label{sec:poisoning-robustness}

When using machine learning trained on crowd-sourced data, such as in federated learning scenarios, or on potentially manipulated data one has to consider malicious user behavior. One such threat is data poisoning, in which users insert $x$ data points into the training dataset to confuse the classifier or introduce a backdoor. Many defenses have been proposed such as using learning behavior to ignore backdoor data~\cite{li2021anti} or post-processing based on adversarial robustness to remove backdoors~\cite{wu2021adversarial} but such methods work heuristically and offer no guarantees. We use the fact that $\epsilon$-differentially-private machine learning algorithms are limited in sensitivity to dataset changes to provide guarantees against poisoning attacks. Ma et al.~\cite{ma2019data} introduce the attack cost $J(D) = \mathbb{E}[C(\mathcal{M}(D))]$, i.e. the expected value of a cost function $C$ that an attacker gets from models produced by $\mathcal{M}$ on dataset $D$. They prove the following about the cost of attacks against differentially-private learners:

\begin{theorem} \label{eq:poison-guarantee}
    \cite{ma2019data} Let $\mathcal{M}$ be an $\epsilon$-differentially-private learner. Let $J(\tilde{D})$ be the attack cost, where $\tilde{D} \ominus D \leq x$ (i.e. a dataset with $x$ poisoned samples compared to clean dataset $D$), then
    \begin{align}
        J(\tilde{D}) &\geq e^{-x \epsilon} J(D), \quad (C \geq 0) \label{eq:guarantee-nonnegative} \\
        J(\tilde{D}) &\geq e^{x \epsilon} J(D). \quad (C \leq 0) \label{eq:guarantee-nonpositive}
    \end{align}
\end{theorem}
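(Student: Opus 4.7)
The plan is to derive the bound from the group privacy property of $\epsilon$-differentially-private mechanisms, which is a standard iterated consequence of Definition~1. If $\tilde D \ominus D \leq x$, I would construct a chain $D = D_0, D_1, \ldots, D_x = \tilde D$ in which each consecutive pair differs in a single element, apply the $\epsilon$-DP inequality $x$ times in succession, and conclude that for every measurable $S \subseteq \mathrm{Range}(\mathcal M)$,
\begin{equation*}
e^{-x\epsilon}\Pr[\mathcal M(D) \in S] \;\leq\; \Pr[\mathcal M(\tilde D) \in S] \;\leq\; e^{x\epsilon}\Pr[\mathcal M(D) \in S].
\end{equation*}
Under the histogram representation $\mathbb{N}^{|\mathcal X|}$ used in Definition~1, this construction covers insertions, deletions, and substitutions uniformly, and the assumption $\delta = 0$ ensures that no additive slack accumulates across the $x$ steps.

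Next I would lift this to a Radon--Nikodym density bound between the output laws $\mu_D$ and $\mu_{\tilde D}$, giving $e^{-x\epsilon}\,d\mu_D \leq d\mu_{\tilde D} \leq e^{x\epsilon}\,d\mu_D$ pointwise almost everywhere. Writing the attack cost as $J(\tilde D) = \int C(m)\,d\mu_{\tilde D}(m)$, both parts of the theorem then reduce to integrating the appropriate direction of this two-sided density sandwich against $C$.

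For the non-negative case $C \geq 0$, multiplying the lower bound $d\mu_{\tilde D} \geq e^{-x\epsilon}\,d\mu_D$ by $C(m) \geq 0$ preserves the inequality, and integrating yields $J(\tilde D) \geq e^{-x\epsilon} J(D)$, i.e. equation~\eqref{eq:guarantee-nonnegative}. For the non-positive case $C \leq 0$, multiplying by $C(m)$ reverses the inequality, so I would instead pair $C \leq 0$ with the \emph{upper} density bound $d\mu_{\tilde D} \leq e^{x\epsilon}\,d\mu_D$ to obtain $C(m)\,d\mu_{\tilde D}(m) \geq e^{x\epsilon} C(m)\,d\mu_D(m)$; integration then delivers $J(\tilde D) \geq e^{x\epsilon} J(D)$, i.e. equation~\eqref{eq:guarantee-nonpositive}.

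The hard part is not technical depth — each step is elementary — but sign bookkeeping: recognizing that the ``$\geq$'' in the two conclusions comes from opposite ends of the two-sided density sandwich depending on the sign of $C$. A reader who misses this is liable to prove only the $C \geq 0$ case correctly and be puzzled that the exponent for $C \leq 0$ is $+x\epsilon$ rather than $-x\epsilon$. Once that is appreciated, the proof is essentially a one-line application of group privacy followed by integration against $C$.
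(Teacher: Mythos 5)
The paper does not prove this statement at all: Theorem~\ref{eq:poison-guarantee} is imported verbatim from Ma et al.~\cite{ma2019data} with only a citation, so there is no in-paper proof to compare against. Your argument is nevertheless correct and is the standard one: group privacy gives the two-sided bound $e^{-x\epsilon}\Pr[\mathcal M(D)\in S]\leq\Pr[\mathcal M(\tilde D)\in S]\leq e^{x\epsilon}\Pr[\mathcal M(D)\in S]$, the two-sided bound on all measurable sets yields mutual absolute continuity and hence a density $f=d\mu_{\tilde D}/d\mu_D$ with $e^{-x\epsilon}\leq f\leq e^{x\epsilon}$ a.e., and integrating $C$ against the correct side of the sandwich gives each inequality; your sign bookkeeping for the $C\leq 0$ case is right. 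Two small remarks. First, for $C\geq 0$ you can bypass the Radon--Nikodym step entirely via the layer-cake identity $\mathbb E[C(\mathcal M(D))]=\int_0^\infty\Pr[C(\mathcal M(D))>t]\,dt$, applying the set-level group-privacy bound to each superlevel set; the symmetric trick on $-C$ handles $C\leq 0$. Second, your claim that the chain covers ``substitutions uniformly'' is slightly loose: under the add/remove neighboring relation of Definition~1 ($\|X-X'\|_1\leq 1$ in the histogram representation), a substitution is two elementary edits, so $x$ must count edits rather than substituted records; this does not affect the theorem as stated, since the hypothesis $\tilde D\ominus D\leq x$ is phrased directly in terms of that edit distance and the paper's attack model only adds samples.
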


We will consider two scenarios with different attacker objectives and their associated cost functions:
\begin{itemize}
    \item An attacker adds or removes up to $x$ samples from a dataset in an attempt to maximally reduce the accuracy of a model learned from the dataset. The associated cost function is the accuracy of the model, i.e. $J_\text{Acc}(\tilde{D}) = \mathbb{E}[\text{Accuracy}(\mathcal{M}(\tilde{D}))]$.
    \item An attacker adds $x$ copies of data points with a trigger pattern inserted into them and the associated label is changed to a target label in an effort to create a backdoor. The associated cost function is the Attack Success Rate (ASR), i.e. the percentage of data points for which the predicted label can be flipped to the target label. Since Theorem \ref{eq:poison-guarantee} assumes (without loss of generality) that the attacker minimizes $J$, we use the complement of the ASR as the cost, as the attacker wants to maximize ASR: $J_\text{ASR}(\tilde{D}) = \mathbb{E}[1 - \text{ASR}(\mathcal{M}(\tilde{D}))]$.
\end{itemize}

These two settings lead to the following robustness guarantees that we empirically evaluate in Section \ref{sec:results} for various private learners and datasets.

\begin{corollary} \label{cor:accuracy-guarantee}
    Let $\mathcal{M}$ be an $\epsilon$-differentially-private learner. Let $J_\text{Acc}(\tilde{D}) = \mathbb{E}[\text{Accuracy}(\mathcal{M}(\tilde{D}))]$ be the model's expected accuracy on the poisoned dataset $\tilde{D}$, where $\tilde{D} \in B(D, x)$ (i.e. a dataset with $x$ poisoned samples), then
    \begin{equation*}
        \mathbb{E}[\text{Accuracy}(\mathcal{M}(\tilde{D}))] \geq e^{-x \epsilon} \; \mathbb{E}[\text{Accuracy}(\mathcal{M}(D))]
    \end{equation*}
\end{corollary}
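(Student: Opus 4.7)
The plan is to derive this corollary as a direct specialization of Theorem \ref{eq:poison-guarantee} (the Ma et al.\ bound) to the accuracy cost function. First, I would observe that accuracy is by definition a nonnegative quantity, bounded in $[0,1]$: it is the probability (over the draw of the mechanism's randomness and any test distribution) that the learned model's prediction equals the true label. So the cost function $C(\cdot) = \text{Accuracy}(\cdot) \geq 0$ falls squarely into the hypothesis of the first inequality \eqref{eq:guarantee-nonnegative} of Theorem \ref{eq:poison-guarantee}.

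Second, I would reconcile the statements of the two results: Theorem \ref{eq:poison-guarantee} is phrased in terms of $\tilde{D} \ominus D \leq x$ while the corollary is phrased in terms of $\tilde{D} \in B(D,x)$. Both are meant to capture "$x$ poisoned samples" — i.e., a ball of radius $x$ under the appropriate neighboring-dataset metric used throughout the paper. I would point out that this is exactly the setting under which group privacy promotes $\epsilon$-DP into $x\epsilon$-DP, which is the engine behind \eqref{eq:guarantee-nonnegative}. Once the two notations are identified, the attack cost $J_\text{Acc}$ of the corollary is precisely the $J$ of Theorem~\ref{eq:poison-guarantee} applied with $C = \text{Accuracy}$.

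Third, I would plug $C = \text{Accuracy}$ and $J(\tilde{D}) = \mathbb{E}[\text{Accuracy}(\mathcal{M}(\tilde{D}))]$ into \eqref{eq:guarantee-nonnegative}, yielding the stated inequality
\[
    \mathbb{E}[\text{Accuracy}(\mathcal{M}(\tilde{D}))] \;\geq\; e^{-x\epsilon}\,\mathbb{E}[\text{Accuracy}(\mathcal{M}(D))].
\]
There is no real obstacle here — the corollary is essentially a naming exercise. The only subtlety worth a sentence in the write-up is verifying that the nonnegativity hypothesis of \eqref{eq:guarantee-nonnegative} (not the nonpositivity branch \eqref{eq:guarantee-nonpositive}) is the correct one to invoke for accuracy, and that the direction of the inequality (lower bound on remaining accuracy, rather than an upper bound on accuracy loss) matches the intended reading of "provable robustness." If anything, the presentational challenge is to make clear why the resulting bound is nontrivial: for small $x\epsilon$, $e^{-x\epsilon} \approx 1 - x\epsilon$, so a private learner with a small $\epsilon$ can absorb many poisoned points before the guaranteed accuracy floor degrades meaningfully — which is exactly the leverage PrivaTree exploits in Section \ref{sec:results}.
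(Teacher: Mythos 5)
Your proposal is correct and matches the paper's own proof exactly: both observe that accuracy is a non-negative cost function and apply the first inequality of Theorem~\ref{eq:poison-guarantee} directly with $C = \text{Accuracy}$. The additional remarks on notation reconciliation and the interpretation of the bound are reasonable but not needed beyond what the paper states.
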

\begin{proof}
    Since the accuracy score function is non-negative this follows directly from Equation \ref{eq:guarantee-nonnegative}:
    \begin{align*}
        J(\tilde{D}) &\geq e^{-x \epsilon} J(D) \\
        \mathbb{E}[\text{Accuracy}(\mathcal{M}(\tilde{D}))] &\geq e^{-x \epsilon} \; \mathbb{E}[\text{Accuracy}(\mathcal{M}(D))] \;. \qedhere
    \end{align*}
\end{proof}

\begin{corollary} \label{cor:asr-guarantee}
    Let $\mathcal{M}$ be an $\epsilon$-differentially-private learner. Let $J_\text{ASR}(\tilde{D}) = \mathbb{E}[1 - \text{ASR}(\mathcal{M}(\tilde{D}))]$ be the expected backdoor attack success rate on the poisoned dataset $\tilde{D}$, where $\tilde{D} \in B(D, x)$ (i.e. a dataset with $x$ poisoned samples), then
    \begin{equation*}
        \mathbb{E}[\text{ASR}(\mathcal{M}(\tilde{D}))] \leq 1 - e^{- x \epsilon} \; \mathbb{E}[1 - \text{ASR}(\mathcal{M}(D))]
    \end{equation*}
\end{corollary}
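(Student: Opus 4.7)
The plan is to apply Theorem \ref{eq:poison-guarantee}, in particular the non-negative branch (Equation \ref{eq:guarantee-nonnegative}), to the cost function $J_\text{ASR}$ and then rearrange the resulting inequality so that $\mathrm{ASR}$ appears on its own on the left-hand side.

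First I would verify that $J_\text{ASR}$ satisfies the hypothesis of the non-negative branch. Since $\mathrm{ASR}$ is a percentage (a probability over test samples), it takes values in $[0,1]$, so $1-\mathrm{ASR}\in[0,1]\subseteq\mathbb{R}_{\geq 0}$, and the same holds for its expectation. Hence $C(\theta)=1-\mathrm{ASR}(\theta)\geq 0$ qualifies as a non-negative cost in the sense of Theorem \ref{eq:poison-guarantee}, and we may invoke Equation \ref{eq:guarantee-nonnegative} with this choice of $C$.

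Applying the theorem gives $J_\text{ASR}(\tilde{D})\geq e^{-x\epsilon}J_\text{ASR}(D)$, i.e.\
\[
\mathbb{E}[1-\mathrm{ASR}(\mathcal{M}(\tilde{D}))]\;\geq\;e^{-x\epsilon}\,\mathbb{E}[1-\mathrm{ASR}(\mathcal{M}(D))].
\]
Using linearity of expectation on the left-hand side to split off the constant $1$, then moving the $\mathbb{E}[\mathrm{ASR}(\mathcal{M}(\tilde{D}))]$ term to the right-hand side and the $e^{-x\epsilon}\mathbb{E}[1-\mathrm{ASR}(\mathcal{M}(D))]$ term to the left, yields exactly the stated bound.

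I do not expect any real obstacle here: the whole argument is a direct specialization of Theorem \ref{eq:poison-guarantee} combined with the trivial observation that $1-\mathrm{ASR}\geq 0$. The only subtlety worth remarking on in the write-up is the sign flip --- because the attacker \emph{maximizes} ASR rather than minimizing it, we must cast the cost as $1-\mathrm{ASR}$ so that Theorem \ref{eq:poison-guarantee} (which is phrased for a minimizing attacker) applies; this is precisely the framing already introduced in the bullet list preceding the corollary, so the proof is really just one line of algebra on top of that observation.
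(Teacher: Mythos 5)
Your proposal is correct and matches the paper's own proof exactly: both invoke the non-negative branch of Theorem \ref{eq:poison-guarantee} with the cost $C = 1 - \text{ASR} \geq 0$ and then rearrange the resulting inequality by linearity of expectation. Your added remark about why the complement of ASR is needed (the minimizing-attacker convention) is consistent with the paper's framing and requires no change.
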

\begin{proof}
    Since $J_\text{ASR}$ is non-negative we derive the guarantee from Equation \ref{eq:guarantee-nonnegative}:
    \begin{align*}
        J(\tilde{D}) &\geq e^{-x \epsilon} J(D) \\
        \mathbb{E}[1 - \text{ASR}(\mathcal{M}(\tilde{D}))] &\geq e^{-x \epsilon} \; \mathbb{E}[1 - \text{ASR}(\mathcal{M}(D))] \\
        \mathbb{E}[\text{ASR}(\mathcal{M}(\tilde{D}))] - \mathbb{E}[1] &\leq - e^{-x \epsilon} \; \mathbb{E}[1 - \text{ASR}(\mathcal{M}(D))] \\
        \mathbb{E}[\text{ASR}(\mathcal{M}(\tilde{D}))] &\leq 1 - e^{-x \epsilon} \; \mathbb{E}[1 - \text{ASR}(\mathcal{M}(D))] \;. \qedhere
    \end{align*}
\end{proof}

\begingroup
\begin{table*}[tb]
\centering
\caption{5-fold cross-validated mean test accuracy and standard errors at $\epsilon {=} 0.1$ for trees of depth 4. PrivaTree* ran without private quantile computation, DPGDF only ran on categorical features. PrivaTree outperforms existing methods on most datasets.}
\label{tab:performance-comparison}
\begin{tabular}{l|c|cc|ccc}
\toprule
\textbf{OpenML dataset} & \textbf{decision tree} & \textbf{BDPT} & \textbf{PrivaTree*} & \textbf{DPGDF} & \textbf{DiffPrivLib} & \textbf{PrivaTree} \\ 
 & no privacy & \multicolumn{2}{c|}{leaking numerical splits} & \multicolumn{3}{c}{differential privacy} \\ \midrule \multicolumn{7}{c}{Numerical data} \\ \midrule
Bioresponse & .701 \tiny $\pm$ .009 & .502 \tiny $\pm$ .001 & \textbf{.574} \tiny $\pm$ .015 & - & .519 \tiny $\pm$ .013 & \textbf{.564} \tiny $\pm$ .017 \\
Diabetes130US & .606 \tiny $\pm$ .002 & .511 \tiny $\pm$ .005 & \textbf{.601} \tiny $\pm$ .001 & - & .507 \tiny $\pm$ .003 & \textbf{.600} \tiny $\pm$ .001 \\
Higgs & .657 \tiny $\pm$ .001 & timeout & \textbf{.658} \tiny $\pm$ .001 & - & .513 \tiny $\pm$ .010 & \textbf{.659} \tiny $\pm$ .001 \\
MagicTelescope & .783 \tiny $\pm$ .008 & .500 \tiny $\pm$ .000 & \textbf{.756} \tiny $\pm$ .003 & - & .553 \tiny $\pm$ .022 & \textbf{.740} \tiny $\pm$ .005 \\
MiniBooNE & .872 \tiny $\pm$ .001 & .500 \tiny $\pm$ .000 & \textbf{.859} \tiny $\pm$ .002 & - & .509 \tiny $\pm$ .006 & \textbf{.858} \tiny $\pm$ .002 \\
bank-marketing & .768 \tiny $\pm$ .004 & .501 \tiny $\pm$ .001 & \textbf{.735} \tiny $\pm$ .008 & - & .538 \tiny $\pm$ .013 & \textbf{.747} \tiny $\pm$ .008 \\
california & .783 \tiny $\pm$ .002 & .500 \tiny $\pm$ .000 & \textbf{.754} \tiny $\pm$ .009 & - & .512 \tiny $\pm$ .005 & \textbf{.756} \tiny $\pm$ .006 \\
covertype & .741 \tiny $\pm$ .001 & .502 \tiny $\pm$ .001 & \textbf{.745} \tiny $\pm$ .002 & - & .569 \tiny $\pm$ .035 & \textbf{.748} \tiny $\pm$ .001 \\
credit & .748 \tiny $\pm$ .001 & .500 \tiny $\pm$ .000 & \textbf{.739} \tiny $\pm$ .003 & - & .516 \tiny $\pm$ .009 & \textbf{.720} \tiny $\pm$ .014 \\
default-of-credit-card-clients & .700 \tiny $\pm$ .006 & .500 \tiny $\pm$ .000 & \textbf{.679} \tiny $\pm$ .008 & - & .549 \tiny $\pm$ .019 & \textbf{.684} \tiny $\pm$ .003 \\
electricity & .731 \tiny $\pm$ .001 & .500 \tiny $\pm$ .000 & \textbf{.738} \tiny $\pm$ .003 & - & .555 \tiny $\pm$ .030 & \textbf{.736} \tiny $\pm$ .006 \\
eye\_movements & .571 \tiny $\pm$ .010 & .500 \tiny $\pm$ .000 & \textbf{.532} \tiny $\pm$ .007 & - & \textbf{.518} \tiny $\pm$ .007 & .514 \tiny $\pm$ .009 \\
heloc & .702 \tiny $\pm$ .004 & .518 \tiny $\pm$ .009 & \textbf{.696} \tiny $\pm$ .002 & - & .532 \tiny $\pm$ .015 & \textbf{.677} \tiny $\pm$ .011 \\
house\_16H & .819 \tiny $\pm$ .003 & .500 \tiny $\pm$ .000 & \textbf{.765} \tiny $\pm$ .010 & - & .555 \tiny $\pm$ .009 & \textbf{.769} \tiny $\pm$ .007 \\
jannis & .718 \tiny $\pm$ .001 & .500 \tiny $\pm$ .000 & \textbf{.707} \tiny $\pm$ .003 & - & .564 \tiny $\pm$ .032 & \textbf{.702} \tiny $\pm$ .003 \\
pol & .930 \tiny $\pm$ .001 & .509 \tiny $\pm$ .014 & \textbf{.871} \tiny $\pm$ .011 & - & .580 \tiny $\pm$ .016 & \textbf{.836} \tiny $\pm$ .015 \\ \midrule
\multicolumn{7}{c}{Numerical \& categorical data} \\ \midrule
albert & .641 \tiny $\pm$ .002 & .501 \tiny $\pm$ .001 & \textbf{.632} \tiny $\pm$ .002 & .511 \tiny $\pm$ .008 & .546 \tiny $\pm$ .017 & \textbf{.631} \tiny $\pm$ .003 \\
compas-two-years & .664 \tiny $\pm$ .005 & .562 \tiny $\pm$ .013 & \textbf{.639} \tiny $\pm$ .007 & .571 \tiny $\pm$ .004 & .569 \tiny $\pm$ .005 & \textbf{.595} \tiny $\pm$ .013 \\
covertype & .756 \tiny $\pm$ .001 & .614 \tiny $\pm$ .001 & \textbf{.755} \tiny $\pm$ .001 & .537 \tiny $\pm$ .011 & .539 \tiny $\pm$ .018 & \textbf{.755} \tiny $\pm$ .001 \\
default-of-credit. & .704 \tiny $\pm$ .005 & .500 \tiny $\pm$ .001 & \textbf{.691} \tiny $\pm$ .005 & .528 \tiny $\pm$ .004 & .535 \tiny $\pm$ .015 & \textbf{.689} \tiny $\pm$ .006 \\
electricity & .732 \tiny $\pm$ .002 & .500 \tiny $\pm$ .000 & \textbf{.740} \tiny $\pm$ .004 & .518 \tiny $\pm$ .004 & .527 \tiny $\pm$ .014 & \textbf{.734} \tiny $\pm$ .004 \\
eye\_movements & .570 \tiny $\pm$ .003 & .500 \tiny $\pm$ .001 & \textbf{.514} \tiny $\pm$ .007 & .523 \tiny $\pm$ .004 & .524 \tiny $\pm$ .009 & \textbf{.533} \tiny $\pm$ .005 \\
road-safety & .728 \tiny $\pm$ .001 & .685 \tiny $\pm$ .002 & \textbf{.710} \tiny $\pm$ .003 & .685 \tiny $\pm$ .002 & .555 \tiny $\pm$ .033 & \textbf{.721} \tiny $\pm$ .001 \\ \midrule
\multicolumn{7}{c}{UCI datasets (numerical \& categorical)} \\ \midrule
adult & .840 \tiny $\pm$ .001 & .752 \tiny $\pm$ .000 & \textbf{.815} \tiny $\pm$ .001 & .753 \tiny $\pm$ .002 & .757 \tiny $\pm$ .003 & \textbf{.820} \tiny $\pm$ .003 \\
breast-w & .950 \tiny $\pm$ .007 & .614 \tiny $\pm$ .025 & \textbf{.909} \tiny $\pm$ .025 & - & .842 \tiny $\pm$ .037 & \textbf{.886} \tiny $\pm$ .019 \\
diabetes & .723 \tiny $\pm$ .012 & .624 \tiny $\pm$ .017 & \textbf{.659} \tiny $\pm$ .011 & - & .667 \tiny $\pm$ .010 & \textbf{.673} \tiny $\pm$ .019 \\
mushroom & .977 \tiny $\pm$ .005 & .880 \tiny $\pm$ .013 & \textbf{.973} \tiny $\pm$ .011 & .761 \tiny $\pm$ .042 & .749 \tiny $\pm$ .048 & \textbf{.985} \tiny $\pm$ .002 \\
nursery & 1.000 \tiny $\pm$ .000 & \textbf{1.000} \tiny $\pm$ .000 & \textbf{1.000} \tiny $\pm$ .000 & .707 \tiny $\pm$ .027 & .767 \tiny $\pm$ .061 & \textbf{1.000} \tiny $\pm$ .000 \\
vote & .944 \tiny $\pm$ .014 & .608 \tiny $\pm$ .062 & \textbf{.771} \tiny $\pm$ .081 & .561 \tiny $\pm$ .081 & \textbf{.827} \tiny $\pm$ .036 & .737 \tiny $\pm$ .061 \\ \bottomrule
\end{tabular}
\end{table*}
\endgroup

\begingroup
\begin{table*}[tb]
\centering
\caption{5-fold cross-validated mean test accuracy and poisoning accuracy guarantee against a percentage of poisoned samples on numerical datasets. Stronger privacy provides stronger poisoning robustness but comes at the cost of clean dataset accuracy. PrivaTree outperforms private logistic regression on this benchmark.}
\label{tab:poison-performance}

\begin{minipage}{.49\linewidth}
\begin{tabular}{@{}lllrrrr@{}}
\toprule
 &  &  &  & \multicolumn{3}{c}{\textbf{poison guarantee}} \\
\textbf{dataset} & \textbf{method} & \textbf{$\epsilon$} & \textbf{acc.} & \textbf{0.1\%} & \textbf{0.5\%} & \textbf{1\%} \\ \midrule
Bioresponse & PrivaTree & .01 & .522 & \textbf{.511} & \textbf{.458} & \textbf{.398} \\
 &  & .1 & \textbf{.545} & .446 & .149 & .037 \\
 & diffprivlib LR & .01 & .539 & .528 & .473 & .411 \\
 &  & .1 & .539 & .442 & .147 & .036 \\ \midrule
Diabetes130US & PrivaTree & .01 & .568 & \textbf{.324} & \textbf{.033} & \textbf{.002} \\
 &  & .1 & \textbf{.600} & .002 & .000 & .000 \\
 & diffprivlib LR & .01 & .516 & .295 & .030 & .002 \\
 &  & .1 & .528 & .002 & .000 & .000 \\ \midrule
Higgs & PrivaTree & .01 & .646 & .000 & .000 & .000 \\
 &  & .1 & \textbf{.659} & .000 & .000 & .000 \\
 & diffprivlib LR & .01 & .496 & .000 & .000 & .000 \\
 &  & .1 & .592 & .000 & .000 & .000 \\ \midrule
MagicTelescope & PrivaTree & .01 & .652 & \textbf{.590} & \textbf{.384} & \textbf{.224} \\
 &  & .1 & \textbf{.738} & .271 & .004 & .000 \\
 & diffprivlib LR & .01 & .598 & .541 & .352 & .205 \\
 &  & .1 & .662 & .243 & .003 & .000 \\ \midrule
MiniBooNE & PrivaTree & .01 & .764 & \textbf{.428} & \textbf{.042} & \textbf{.002} \\
 &  & .1 & \textbf{.852} & .003 & .000 & .000 \\
 & diffprivlib LR & .01 & .418 & .234 & .023 & .001 \\
 &  & .1 & .491 & .001 & .000 & .000 \\ \midrule
bank-marketing & PrivaTree & .01 & .564 & \textbf{.521} & \textbf{.371} & \textbf{.243} \\
 &  & .1 & \textbf{.733} & .329 & .011 & .000 \\
 & diffprivlib LR & .01 & .480 & .443 & .315 & .207 \\
 &  & .1 & .578 & .260 & .009 & .000 \\ \midrule
california & PrivaTree & .01 & .699 & \textbf{.595} & \textbf{.308} & \textbf{.134} \\
 &  & .1 & \textbf{.762} & .154 & .000 & .000 \\
 & diffprivlib LR & .01 & .509 & .434 & .224 & .098 \\
 &  & .1 & .564 & .114 & .000 & .000 \\ \midrule
covertype & PrivaTree & .01 & .729 & \textbf{.008} & .000 & .000 \\
 &  & .1 & \textbf{.747} & .000 & .000 & .000 \\
 & diffprivlib LR & .01 & .574 & .006 & .000 & .000 \\
 &  & .1 & .615 & .000 & .000 & .000 \\ \bottomrule
\end{tabular}
\end{minipage}
\begin{minipage}{.49\linewidth}
\begin{tabular}{@{}lllrrrr@{}}
\toprule
 &  &  &  & \multicolumn{3}{c}{\textbf{poison guarantee}} \\
\textbf{dataset} & \textbf{method} & \textbf{$\epsilon$} & \textbf{acc.} & \textbf{0.1\%} & \textbf{0.5\%} & \textbf{1\%} \\ \midrule
credit & PrivaTree & .01 & .536 & \textbf{.471} & \textbf{.277} & \textbf{.142} \\
 &  & .1 & \textbf{.725} & .198 & .001 & .000 \\
 & diffprivlib LR & .01 & .464 & .408 & .240 & .123 \\
 &  & .1 & .526 & .143 & .001 & .000 \\ \midrule
default-of-credit. & PrivaTree & .01 & .552 & \textbf{.499} & \textbf{.325} & \textbf{.191} \\
 &  & .1 & \textbf{.676} & .249 & .003 & .000 \\
 & diffprivlib LR & .01 & .506 & .458 & .298 & .175 \\
 &  & .1 & .560 & .206 & .003 & .000 \\ \midrule
electricity & PrivaTree & .01 & .670 & \textbf{.496} & \textbf{.145} & \textbf{.031} \\
 &  & .1 & \textbf{.732} & .036 & .000 & .000 \\
 & diffprivlib LR & .01 & .484 & .358 & .105 & .022 \\
 &  & .1 & .505 & .025 & .000 & .000 \\ \midrule
eye\_movements & PrivaTree & .01 & .507 & \textbf{.477} & \textbf{.375} & \textbf{.278} \\
 &  & .1 & \textbf{.515} & .283 & .026 & .001 \\
 & diffprivlib LR & .01 & .511 & .481 & .378 & .280 \\
 &  & .1 & .503 & .276 & .025 & .001 \\ \midrule
heloc & PrivaTree & .01 & .593 & \textbf{.547} & \textbf{.397} & \textbf{.266} \\
 &  & .1 & \textbf{.681} & .306 & .012 & .000 \\
 & diffprivlib LR & .01 & .534 & .493 & .358 & .240 \\
 &  & .1 & .524 & .236 & .010 & .000 \\ \midrule
house\_16H & PrivaTree & .01 & .615 & \textbf{.556} & \textbf{.362} & \textbf{.211} \\
 &  & .1 & \textbf{.751} & .276 & .004 & .000 \\
 & diffprivlib LR & .01 & .489 & .442 & .288 & .168 \\
 &  & .1 & .499 & .184 & .002 & .000 \\ \midrule
jannis & PrivaTree & .01 & .634 & \textbf{.400} & \textbf{.064} & \textbf{.006} \\
 &  & .1 & \textbf{.705} & .007 & .000 & .000 \\
 & diffprivlib LR & .01 & .482 & .304 & .048 & .005 \\
 &  & .1 & .569 & .006 & .000 & .000 \\ \midrule
pol & PrivaTree & .01 & .580 & \textbf{.535} & \textbf{.389} & \textbf{.261} \\
 &  & .1 & \textbf{.859} & .386 & .016 & .000 \\
 & diffprivlib LR & .01 & .477 & .440 & .320 & .214 \\
 &  & .1 & .635 & .285 & .012 & .000 \\ \bottomrule
\end{tabular}
\end{minipage}

\end{table*}
\endgroup

\section{Results} \label{sec:results}
We compare the performance of PrivaTree with regular decision trees from Scikit-learn~\cite{pedregosa2011scikit} and 3 previous methods for training private decision trees: DiffPrivLib~\cite{DBLP:journals/corr/abs-1907-02444}, BDPT~\cite{DBLP:journals/compsec/GuanSSWD20} and DPGDF~\cite{DBLP:conf/icassp/XinY0H19}. DiffPrivLib is a widely used Python library for differential privacy and implements several private machine learning models. Their decision tree implementation creates random decision nodes and uses all privacy budget to label leaves using the permute-and-flip mechanism. Since DiffPrivLib and Scikit-learn do not natively support categorical features we encode these into integers. BDPT and DPGDF did not share their implementations so we implemented these using primitives from DiffPrivLib. Since DPGDF only supports categorical variables we run experiments as in the work of Borhan~\cite{DBLP:phd/au/Borhan18} and remove numerical features. 
BDPT only heuristically protects numerical feature values so we compare it against PrivaTree* a variant of PrivaTree where we compute quantiles non-privately, and set $\epsilon_\text{quantile} = 0$ and $\epsilon_\text{node,num} = (\epsilon - \epsilon_\text{leaf}) \cdot \frac{1}{d}$.

We also compare the performance of PrivaTree on poisoning robustness with differentially-private logistic regression and Deep Partition Aggregation (DPA). Previous works on poisoning robustness with differential privacy have used private logistic regression~\cite{chaudhuri2011differentially} as a robust interpretable model. We use the implementation by DiffPrivLib~\cite{DBLP:journals/corr/abs-1907-02444} which perturbs the optimal model parameters. Before training, we center the data and normalize it to unit variance as this improves optimization. DPA trades interpretability and privacy for strong poisoning robustness guarantees by ensembling many models trained on distinct data subsets. We implement DPA with 1000 decision trees as tree-based models generally work better than neural networks for tabular data~\cite{grinsztajn2022tree}.

All experiments ran on a computer with 16GB of RAM and a 2 GHz Intel i5 processor with 4 cores. 
See our code for more details\footnote{\url{https://github.com/tudelft-cda-lab/PrivaTree}}.

\subsection{Predictive performance}
To compare PrivaTree to existing works we evaluated performance on two well-known benchmarks. First, we experimented on 6 datasets from the UCI repository~\cite{uci_repository} that previous works tested on. However, these datasets are often small (\textit{diabetes}), too easy to predict (\textit{nursery}), or imbalanced (\textit{adult}), which skews performance numbers. To complement this, we therefore also run experiments on the tabular data benchmark~\cite{grinsztajn2022tree}. These datasets were chosen to be real-world, balanced, not too small, and not too simple. We removed rows with missing values and computed the public feature ranges based on the datasets, the categorical values are supplied by OpenML~\cite{OpenML2013}.

In Table \ref{tab:performance-comparison} we present the accuracy scores on both benchmarks for trees of depth 4 computed with 5-fold stratified cross-validation at a privacy budget of $\epsilon = 0.1$. Results for other budgets are given in the appendix. Since all private algorithms are based on the greedy algorithm for decision trees, the goal is to score similarly to the non-private trees. On almost all datasets PrivaTree outperforms BDPT, DPGDF, and DiffPrivLib or performs similarly. On \textit{breast-w} and \textit{vote} however, DiffPrivLib sometimes performs better. This is because, on such small datasets, it is better to avoid spending the privacy budget on good decision nodes and instead spend all budget on labeling leaves correctly. On most datasets, there is only a difference in score of a few percentage points between PrivaTree and PrivaTree*. BDPT has previously only been tested on numerical features with few unique values and fails to train accurate trees on the numerical tabular benchmark. In Figure \ref{fig:varying-epsilon} we visualize the average accuracy over 50 runs when varying the total privacy budget $\epsilon$ for depth 4 trees on the \textit{adult} and \textit{compas-two-years} datasets. Again, on very small $\epsilon$ values DiffPrivLib outperforms the other methods. However, as soon as there is enough privacy budget to see value from choosing better decision nodes (around $\epsilon {=} 0.005$ and $\epsilon {=} 0.05$ in the figure respectively) PrivaTree dominates the rest of the methods.

\begin{figure*}[tb]
     \centering
     \begin{subfigure}[b]{0.48\textwidth}
         \centering
         \includegraphics[width=\textwidth]{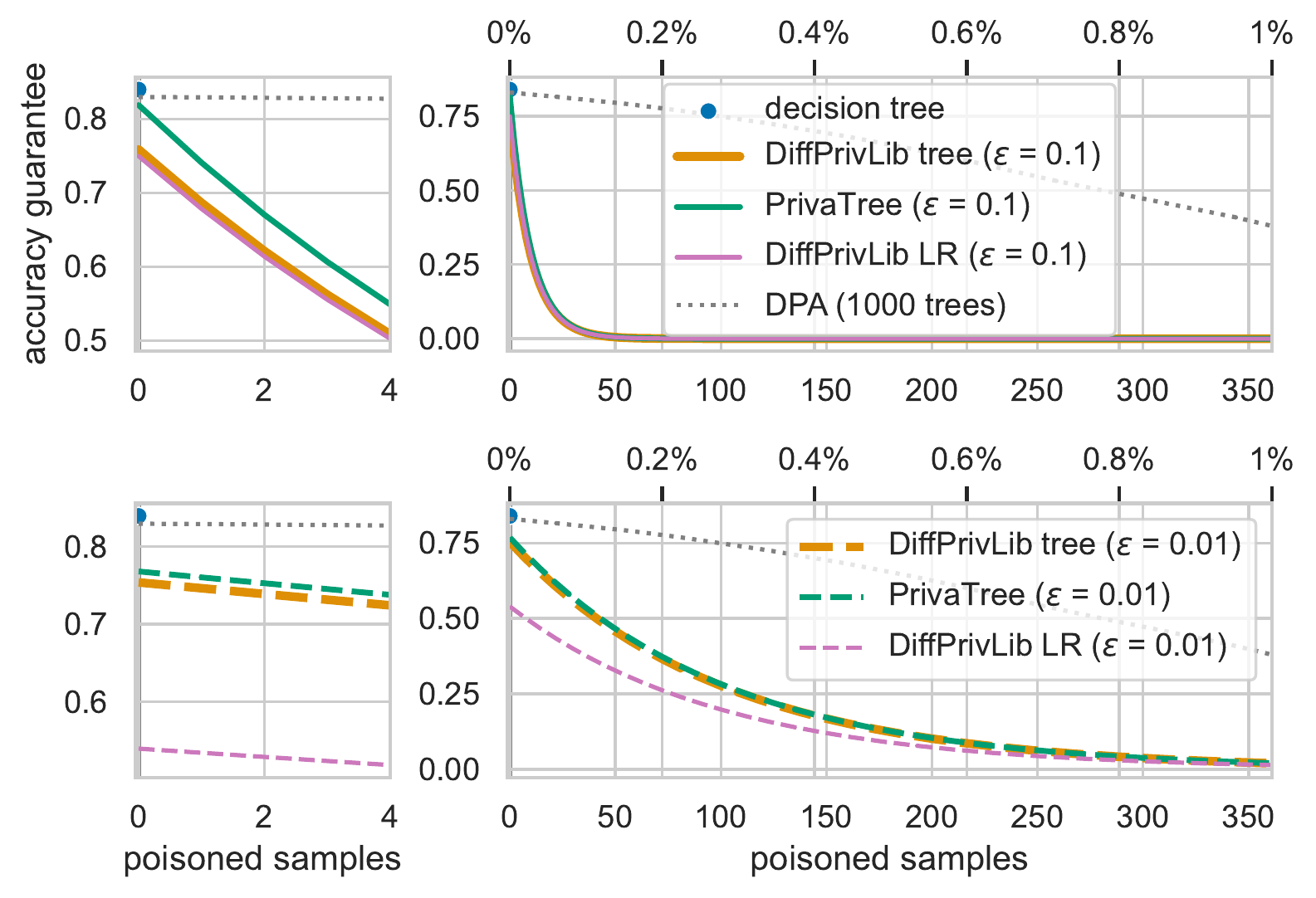}
         \caption{\textit{adult} dataset (45222 samples, 14 features)}
     \end{subfigure}
     \hfill
     \begin{subfigure}[b]{0.48\textwidth}
         \centering
         \includegraphics[width=\textwidth]{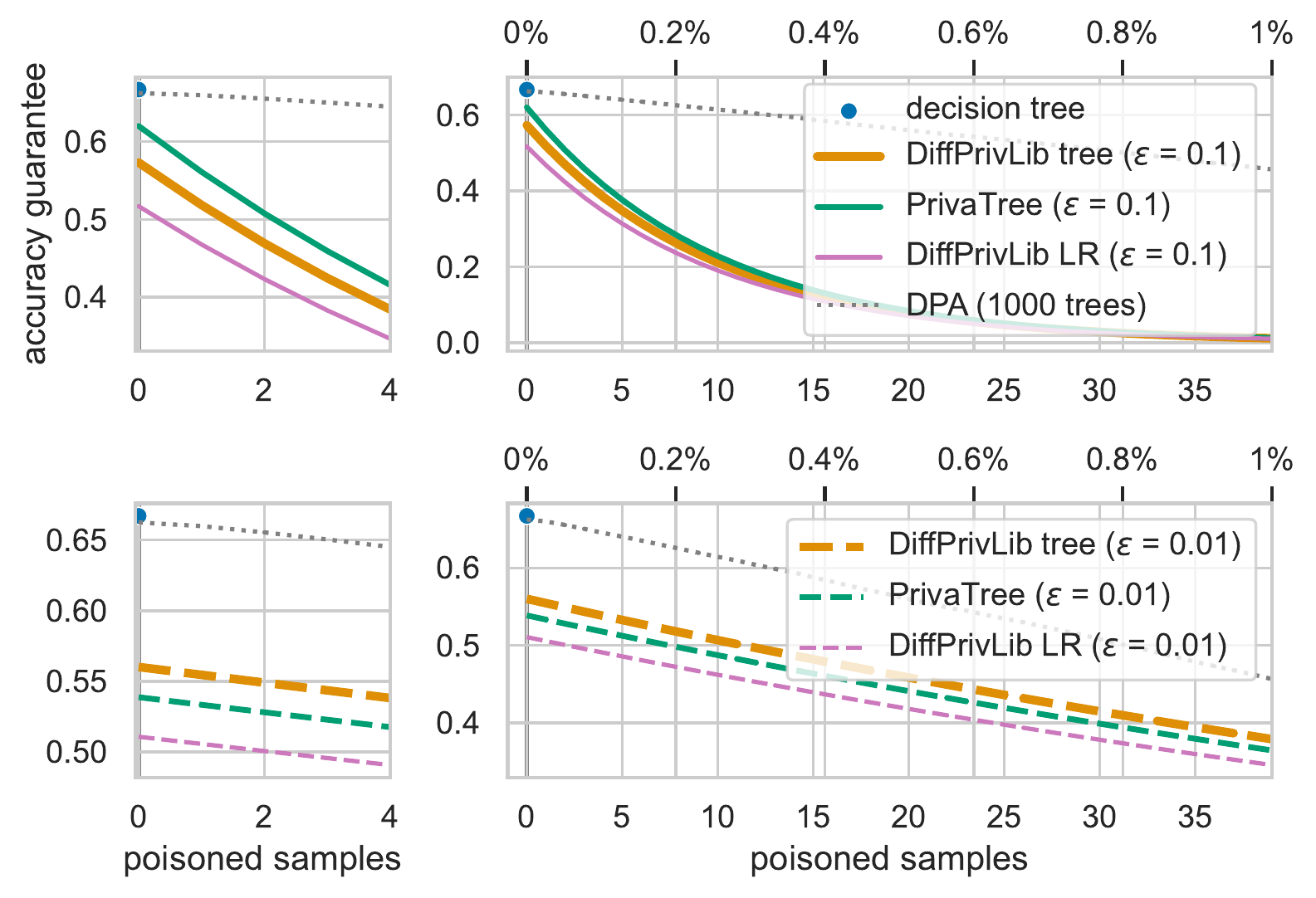}
         \caption{\textit{compas-two-years} (4966 samples, 11 features)}
     \end{subfigure}
        \caption{Guaranteed accuracy when varying the number of poisoned samples to up to 1\% of the dataset size for trees of depth 4. A DPA ensemble of 1000 trees offers strong guarantees but is not interpretable nor does it maintain privacy. For differentially-private learners, there is a trade-off between clean accuracy and the quality of the robustness guarantee.}
        \label{fig:varying-poison}
\end{figure*}

\begin{figure}[tb]
     \centering
     \begin{subfigure}[c]{0.1\columnwidth}
         \centering
         \includegraphics[width=\textwidth]{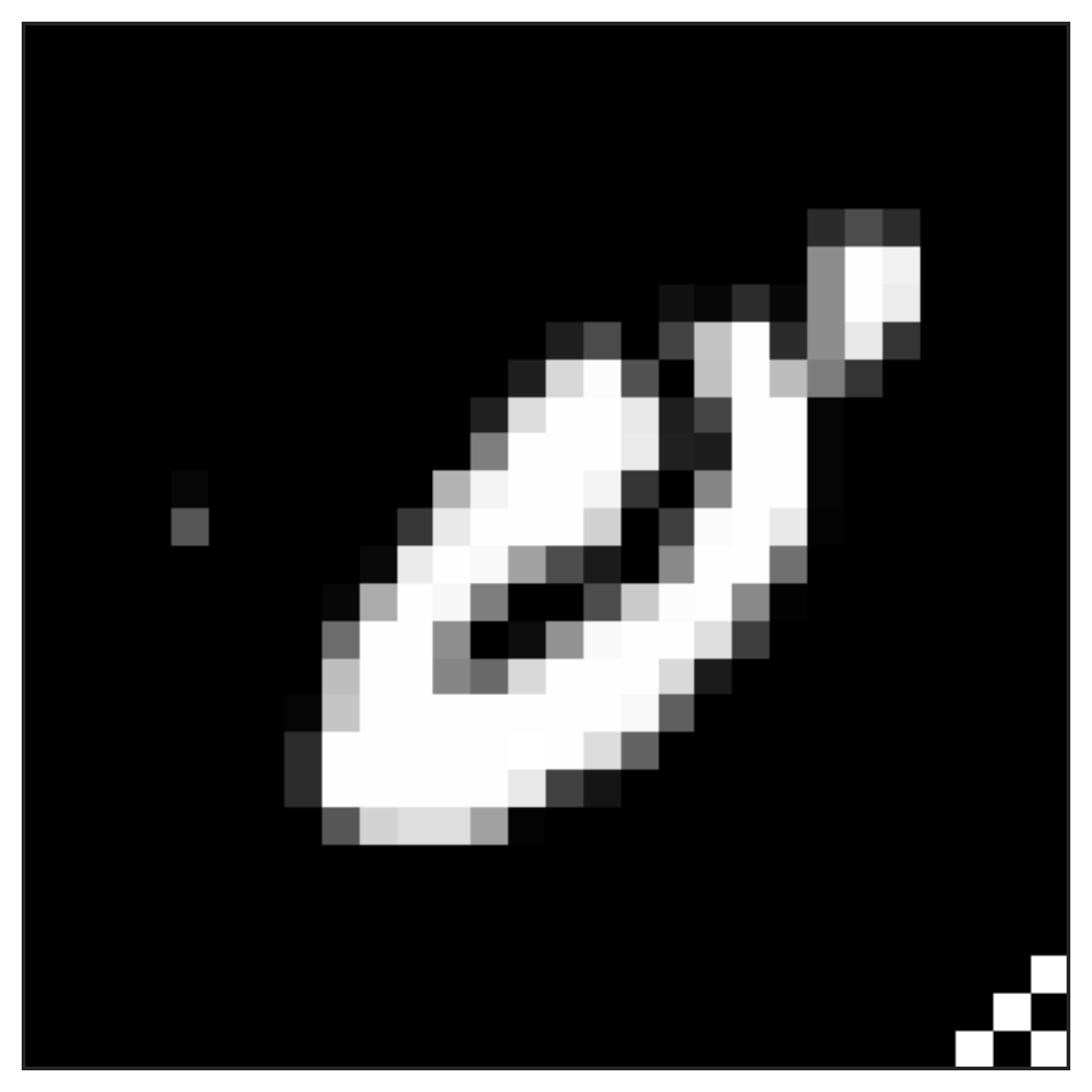}
     \end{subfigure}
     \hfill
     \begin{subfigure}[c]{0.88\columnwidth}
         \centering
         \includegraphics[width=\textwidth]{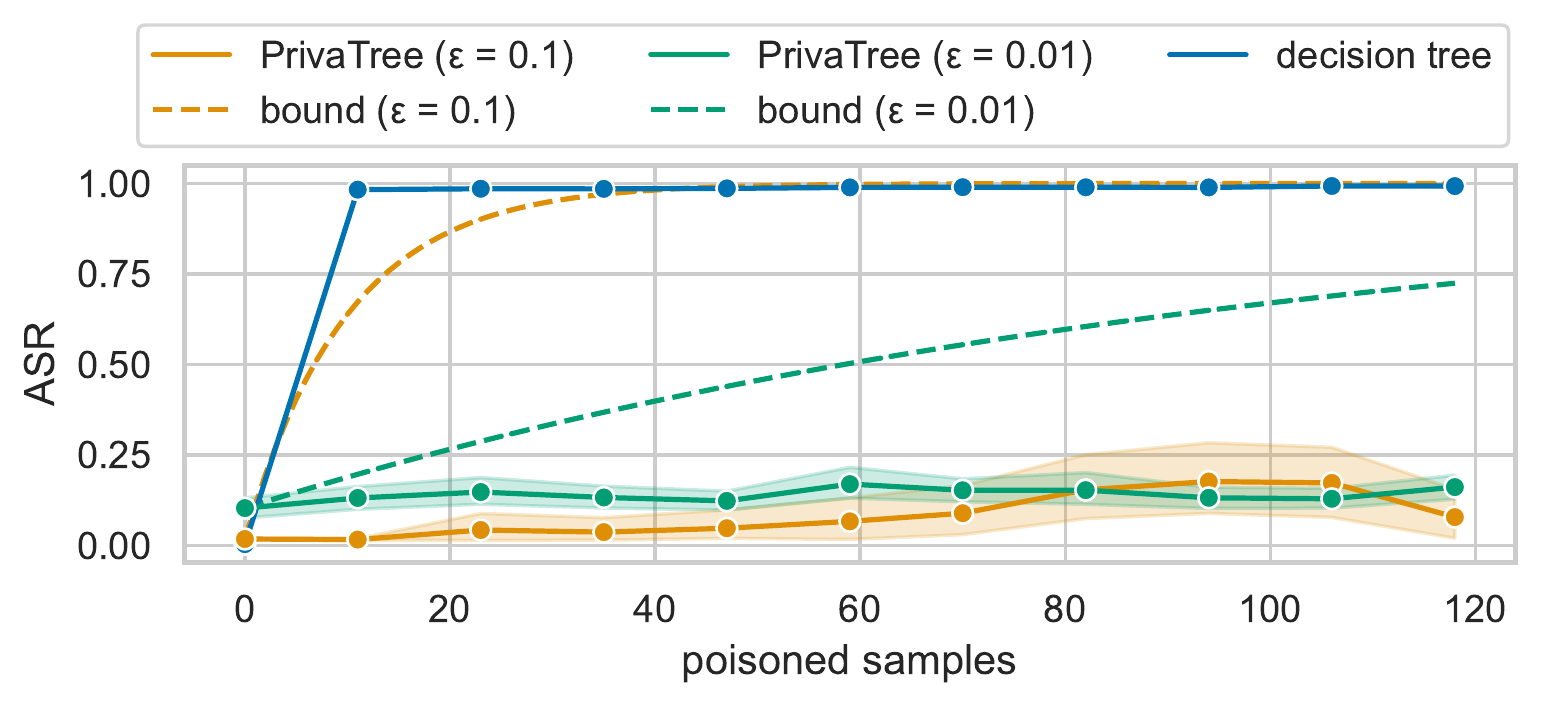}
     \end{subfigure}
        \caption{An adversary injects zeros with a trigger pattern to create a backdoor for class 1. (left) Poisoned sample of a 0 labeled as 1 with a trigger in the bottom-right. (right) The attack success rate when varying the number of poisoned samples in MNIST 0 vs 1 out of 11,200 train samples. $\epsilon {=} 0.01$ offers a tighter bound than $\epsilon {=} 0.1$ but in practice, both values defend well against the backdoor attack on this dataset.}
        \label{fig:backdoor-attack}
\end{figure}

\subsection{Poisoning robustness guarantees for tabular data}
Recall from Section \ref{sec:poisoning-robustness} that differentially-private learners offer guarantees on the loss of accuracy incurred by attackers who poison the training dataset. In Figure \ref{fig:varying-poison}, we visualized the guaranteed accuracy when varying the number of poisoned sampled in the dataset from 0\% to 1\% of the original train set size. These guarantees were determined by estimating the expected test accuracy with 50 random train-test splits and then computing the guarantees with Corollary \ref{cor:accuracy-guarantee}. For comparison, we visualize the guarantee for DPA with an ensemble of 1000 depth 4 trees, but this method does not protect privacy and is not interpretable. The strength of the poisoning robustness guarantee for private learners is determined largely by the choice of $\epsilon$ and to a lesser extent the accuracy on the clean dataset. Therefore, there is an important trade-off between clean data accuracy and robustness guarantee.

Since previous works have considered the poisoning robustness guarantees of private logistic regression as an interpretable model, we compare against this method in more detail. In Table \ref{tab:poison-performance}, we display the accuracy and guarantees at various poison levels for private logistic regression and PrivaTree. We evaluated the models on numerical datasets as this will prevent differences in performance due to the way categorical features are encoded for logistic regression and PrivaTree. Results on data with categorical features can be found in the appendix. It is clear that there is a trade-off between accuracy and poison guarantee for both methods as guarantees at levels of 0.1\% data poisoning are already always better when $\epsilon = 0.01$, whereas test accuracy is always better when $\epsilon = 0.1$. PrivaTrees outperform private logistic regression at equal privacy levels as these complex datasets likely contain non-linear patterns that are better captured by decision trees. Therefore, in the future, one should consider private decision trees as an alternative to private logistic regression when learning interpretable models with poisoning robustness guarantees.

\subsection{Backdoor robustness on MNIST}

To demonstrate the effectiveness of differentially private decision tree learners at mitigating data poisoning attacks, we evaluated backdoor attacks on the MNIST 0 vs 1 dataset. Specifically, we repeat the experiment from Badnets~\cite{gu2017badnets} in which the adversary adds a fixed trigger pattern to the bottom right corner of the image in an attempt to force zeros to be classified as ones. To achieve this, the adversary copies $x$ zeros, adds the trigger pattern to these images, and adds the copies to the training set with label 1. An example of a zero with a trigger pattern is shown in Figure \ref{fig:backdoor-attack}. To measure the robustness of models against the backdoor, we compute the Attack Success Rate (ASR), which is the percentage of test samples with label 0 that are predicted as 1 when the trigger pattern is added.
In Figure \ref{fig:backdoor-attack}, we plot the ASR of a regular decision tree and PrivaTrees with privacy budgets 0.1 and 0.01 and their bounds computed with Corollary \ref{cor:asr-guarantee} against a varying number of poisoned samples ranging between 0\% to 1\% of the dataset. All trees were trained on 50 train-test splits and had a depth of 4. With only 0.01\% of the train set poisoned, regular decision trees already suffer from an ASR of almost 100\% whereas PrivaTrees on average stay at an ASR of under 20\% for the entire range. While the bound for $\epsilon = 0.01$ is much tighter than the bound for $\epsilon = 0.1$, PrivaTrees perform well in practice for both settings.

Strong privacy and robustness guarantees come at the cost of utility, i.e. clean dataset accuracy. The test accuracy scores without poisoned samples for the models in Figure \ref{fig:backdoor-attack} were 99.5\% for the regular decision tree, 98.7\% for PrivaTree with $\epsilon{=}0.1$, and 97.4\% for PrivaTree with $\epsilon{=}0.01$.

\section{Discussion}

Some limitations can arise when applying PrivaTree to real-world scenarios. In our experiments, we compared the performance of various differentially-private decision tree learners on UCI data and the tabular data benchmark. While some UCI datasets are too easy, the tabular benchmark~\cite{grinsztajn2022tree} was specifically curated so that decision trees alone do not easily score perfectly. For real use cases, data could be easier to classify. Also, while our code supports multiclass classification these benchmarks contain only binary classification tasks and so we have not evaluated models in this setting. It is worth noting that in the multiclass case methods such as logistic regression need to train a separate model for each class which hinders interpretability while decision trees still learn one tree.

As is typical, we assume that the range of numerical features, the set of possible categorical values, and the set of class labels are public knowledge. Additionally, we use the number of samples in the training set to select an efficient value for $\epsilon_\text{leaf}$ which assumes that we can publish information on the dataset size while some other works protect this value. One mitigation is to use rough estimates of the dataset size, e.g. only the order of magnitude.

Privacy and robustness in machine learning are important topics as models trained on user data are continuously deployed in the world. Differential privacy is a promising technique for this and we improve the performance of decision trees at high differential privacy levels. However, we want to warn against over-optimism as differential privacy is not a silver bullet for AI security. Engineers must take into account in which context models are deployed to decide what constitutes an acceptable privacy risk and must take into account what attributes are not protected by our method. Regarding poisoning robustness, it is also vital to understand the threat model that is being defended against to verify that the robustness guarantees for differentially-private learners apply. We hope that improvements in the privacy-utility trade-off for interpretable learners, such as the ones we propose, will increase the adoption of interpretable and private methods to improve the trustworthiness of machine learning systems.

\section{Conclusion}
In this paper, we proposed a new algorithm for training differentially private decision trees called PrivaTree. PrivaTree uses private histograms for node selection, the permute-and-flip mechanism for leaf labeling, and a more efficient privacy budget distribution method to improve the privacy-accuracy trade-off. Our experiments on two benchmarks demonstrate that PrivaTree scores similarly or better than existing works on accuracy at a fixed privacy budget. Moreover, we investigated the poisoning robustness guarantees for differentially-private learners and also applied this to the setting of backdoor attacks. On the MNIST 0 vs 1 task, differentially-private decision trees provide a fivefold reduction in attack success rate compared to decision trees trained without privacy. 
While our work makes progress in privacy budget allocation to improve the privacy-utility trade-off, follow-up work may further improve this trade-off in the very high privacy regime.

\clearpage
\balance
\bibliographystyle{IEEEtran}
\bibliography{IEEEabrv,paper}

\clearpage

\appendix
\section{Appendix}

\subsection{Expected worst-case leaf labeling error}

In the main text, we used the fact that we could bound the amount of privacy budget $\epsilon'_\mathrm{leaf}$ needed to label leaves with expected worst-case incurred error at most $\mathcal{E}_{max}$. By limiting the amount of privacy budget for leaf labeling this way we make sure to leave more privacy budget for node selection when possible. We give a short proof of the theorem below.

\begingroup
\def\thetheorem{\ref{cor:max-leaf-error}}
\begin{corollary}
For $K$ classes, $n$ samples and depth $d$ trees, the amount of privacy budget $\epsilon'_\text{leaf}$ needed for labeling leaves with the permute-and-flip mechanism with expected error $\mathbb{E}[\mathcal{E}(\mathcal{M}_{PF}, \vec{N})]$ of at most $\mathcal{E}_{max}$ is:
\begin{equation*}
    \epsilon'_\mathrm{leaf} \leq \frac{2^d \max_p 2 \log(\frac{1}{p}) \left( 1 - \frac{1 - (1 - p)^K}{K p} \right)}{n \; \mathcal{E}_{max}}\;,
\end{equation*}
\end{corollary}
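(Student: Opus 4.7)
The plan is to lift the per-leaf expected error of permute-and-flip to a tree-wide accuracy-loss bound and then invert it to solve for $\epsilon_\text{leaf}$. The only nontrivial input is the worst-case expected-error bound for $\mathcal{M}_{PF}$ established by McKenna and Sheldon; everything else is bookkeeping across the at-most $2^d$ leaves of a depth-$d$ tree.

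Concretely, I would first recall that at each leaf $\mathcal{M}_{PF}$ is invoked on the class-count vector $\vec{N} = (N_0, \ldots, N_{K-1})$ with utility $u(\vec{N}, k) = N_k$ (sensitivity $1$) and privacy budget $\epsilon_\text{leaf}$. The McKenna--Sheldon worst-case expected-error bound then controls the selection gap $\mathbb{E}[\max_k N_k - N_{\hat k}]$ by an expression of the form $\tfrac{2\log(1/p)}{\epsilon_\text{leaf}}\bigl(1 - \tfrac{1-(1-p)^K}{Kp}\bigr)$, with the worst-case $p$ giving rise to the $\max_p$ appearing in the corollary. This gap is exactly the number of additional training-sample misclassifications incurred at that leaf by the private labeling, relative to the non-private majority vote. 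Summing across the at-most $2^d$ leaves --- which partition the $n$ training points and for which the per-leaf bound is leaf-size-independent --- multiplies the per-leaf bound by $2^d$, and dividing by $n$ converts the count of extra misclassifications into the expected accuracy-loss $\mathbb{E}[\mathcal{E}(\mathcal{M}_{PF}, \vec{N})]$. Imposing $\mathbb{E}[\mathcal{E}(\mathcal{M}_{PF}, \vec{N})] \leq \mathcal{E}_{max}$ and solving for $\epsilon_\text{leaf}$ then rearranges directly into the stated inequality.

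The main obstacle is invoking the McKenna--Sheldon bound in the correct parametric form, particularly the direction of the $p$-optimization: a naive ``for all $p$'' reading of their expected-error bound would lead one to a $\min_p$ rather than the $\max_p$ that actually appears in the statement, so care is required to extract the worst-case parametric expression from their analysis. A secondary subtlety is that the per-leaf bound is stated as a selection-error gap and must be reinterpreted as extra misclassifications; once this identification is made, and once the correct parametric bound is in hand, the leaf-aggregation by a factor of $2^d$ and the normalization by $n$ are routine manipulations.
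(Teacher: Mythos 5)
Your proposal follows essentially the same route as the paper's own proof: invoke the McKenna--Sheldon worst-case expected-error bound for permute-and-flip (parametrized by $p$ and maximized over it), interpret the per-leaf selection gap as extra misclassifications, multiply by the $2^d$ leaves, normalize by $n$, and invert to solve for $\epsilon'_\text{leaf}$. The identification of the error gap with additional training-sample misclassifications and the care about the direction of the $p$-optimization are exactly the points the paper's appendix proof relies on, so no substantive difference remains.
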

\addtocounter{theorem}{-1}
\endgroup
\begin{proof}
Recall Proposition 4 from the permute-and-flip paper~\cite{mckenna2020permute} that for a vector of candidates with errors $\vec{q} \in \mathbb{R}^K$ the expected worst case error $\mathbb{E}[\mathcal{E}(\mathcal{M}_{PF}, \vec{q})]$ occurs when all but one candidates share the same error $c$ (and thus share probability of being selected $p = \exp(\frac{\epsilon}{2 \Delta} c)$). The expected errors for such vectors of this form are:
\begin{equation*}
    \mathbb{E}[\mathcal{E}(\mathcal{M}_{PF}, \vec{q})] = \frac{2 \Delta}{\epsilon} \log \left(\frac{1}{p} \right) \left( 1 - \frac{1 - (1 - p)^K}{K p} \right) \;.
\end{equation*}
The worst-case expected error can be found by maximizing over $p \in [0, 1]$, i.e. after substituting sensitivity $\Delta {=} 1$ and $\epsilon {=} \epsilon'_\text{leaf}$:
\begin{equation*}
    \max_p \frac{2}{\epsilon'_\text{leaf}} \log \left(\frac{1}{p} \right) \left( 1 - \frac{1 - (1 - p)^K}{K p} \right) \;.
\end{equation*}

Now we do not want to bound the total error but the percentage error so we divide by $n$ samples, and since we can incur error for every leaf we multiply by $2^d$. After bounding by the user-specified value $\mathcal{E}_{max}$ we find a sufficient value for $\epsilon'_\text{leaf}$:
\begin{align*}
    \frac{2^d \max_p \frac{2}{\epsilon'_\text{leaf}} \log(\frac{1}{p}) \left( 1 - \frac{1 - (1 - p)^K}{K p} \right)}{n} &\leq \mathcal{E}_{max} \;, \\
    \frac{2^d \max_p 2 \log(\frac{1}{p}) \left( 1 - \frac{1 - (1 - p)^K}{K p} \right)}{n \; \mathcal{E}_{max}} &\leq \epsilon'_\text{leaf} \;.
\qedhere
\end{align*}
\end{proof}

In our implementation, we solve the maximization term numerically using Scipy.

\subsection{Dataset properties}
We summarize the properties of the datasets that we included in our benchmark in Table~\ref{tab:datasets}, dataset sizes are displayed after removing rows with missing values. Since UCI datasets are imbalanced, private models often perform worse than guessing the majority class for low privacy budgets.

\begingroup
\renewcommand{\arraystretch}{1.11}

\begin{table*}[tb]
\centering
\caption{Properties of the datasets used in this work. Rows with missing values were removed. UCI datasets are often imbalanced.}
\label{tab:datasets}
\begin{tabular}{l|rrrr}
\toprule
\textbf{Dataset} & \textbf{Samples} & \textbf{Features} & \textbf{Categorical} & \textbf{Majority} \\
&  &  & \textbf{features} & \textbf{class share} \\ \midrule  
\multicolumn{5}{c}{Numerical data} \\ \midrule
Bioresponse & 3,434 & 419 & 0 & 0.500 \\
Diabetes130US & 71,090 & 7 & 0 & 0.500 \\
Higgs & 940,160 & 24 & 0 & 0.500 \\
MagicTelescope & 13,376 & 10 & 0 & 0.500 \\
MiniBooNE & 72,998 & 50 & 0 & 0.500 \\
bank-marketing & 10,578 & 7 & 0 & 0.500 \\
california & 20,634 & 8 & 0 & 0.500 \\
covertype & 566,602 & 10 & 0 & 0.500 \\
credit & 16,714 & 10 & 0 & 0.500 \\
default-of-credit-card-clients & 13,272 & 20 & 0 & 0.500 \\
electricity & 38,474 & 7 & 0 & 0.500 \\
eye\_movements & 7,608 & 20 & 0 & 0.500 \\
heloc & 10,000 & 22 & 0 & 0.500 \\
house\_16H & 13,488 & 16 & 0 & 0.500 \\
jannis & 57,580 & 54 & 0 & 0.500 \\
pol & 10,082 & 26 & 0 & 0.500 \\ \midrule  
\multicolumn{5}{c}{Numerical \& categorical data} \\ \midrule
albert & 58,252 & 31 & 10 & 0.500 \\
compas-two-years & 4,966 & 11 & 8 & 0.500 \\
covertype & 423,680 & 54 & 44 & 0.500 \\
default-of-credit-card-clients & 13,272 & 21 & 1 & 0.500 \\
electricity & 38,474 & 8 & 1 & 0.500 \\
eye\_movements & 7,608 & 23 & 3 & 0.500 \\
road-safety & 111,762 & 32 & 3 & 0.500 \\ \midrule  
\multicolumn{5}{c}{UCI datasets (numerical \& categorical)} \\ \midrule
adult & 45,222 & 14 & 8 & 0.752 \\
breast-w & 683 & 9 & 0 & 0.650 \\
diabetes & 768 & 8 & 0 & 0.651 \\
mushroom & 5,644 & 22 & 22 & 0.618 \\
nursery & 12,960 & 8 & 8 & 0.667 \\
vote & 232 & 16 & 16 & 0.534 \\ \bottomrule
\end{tabular}
\end{table*}
\endgroup

\subsection{Runtime comparison}

We measured the runtime of all methods when performing 5-fold cross validations and display the results in Table~\ref{tab:runtime}. Regular decision trees run in milliseconds benefitting from the fast implementation by Scikit-learn~\cite{pedregosa2011scikit}. DiffPrivLib does not need to perform node selection operations and thus only spends milliseconds on propagating data points to the leaves and labeling them. DPGDF, BDPT and PrivaTree usually run in seconds, however, on large numerical datasets BDPT and PrivaTree take minutes. PrivaTree* does not suffer as much from an increase in data size because most time is spent in the private quantile operations with the joint-exp method~\cite{gillenwater2021differentially}.

\begingroup
\renewcommand{\arraystretch}{1.11}

\begin{table*}[tb]
\setlength{\tabcolsep}{3.5pt}
\centering
\caption{Mean runtimes in seconds and standard errors at $\epsilon {=} 0.1$ for trees of depth 4 with 5 repetitions.}
\label{tab:runtime}
\begin{tabular}{l|r|rr|rrr}
\toprule
\textbf{OpenML dataset} & \textbf{decision tree} & \textbf{BDPT} & \textbf{PrivaTree*} & \textbf{DPGDF} & \textbf{DiffPrivLib} & \textbf{PrivaTree} \\ 
 & no privacy & \multicolumn{2}{c|}{leaking numerical splits} & \multicolumn{3}{c}{differential privacy} \\ \midrule \multicolumn{7}{c}{Numerical data} \\ \midrule
Bioresponse & <1 \tiny $\pm$ 0 & 1 \tiny $\pm$ 0 & 2 \tiny $\pm$ 0 & - & <1 \tiny $\pm$ 0 & 7 \tiny $\pm$ 0 \\
Diabetes130US & <1 \tiny $\pm$ 0 & <1 \tiny $\pm$ 0 & <1 \tiny $\pm$ 0 & - & <1 \tiny $\pm$ 0 & 1 \tiny $\pm$ 0 \\
Higgs & 9 \tiny $\pm$ 0 & >7200 & 2 \tiny $\pm$ 0 & - & 6 \tiny $\pm$ 0 & 108 \tiny $\pm$ 3 \\
MagicTelescope & <1 \tiny $\pm$ 0 & 1 \tiny $\pm$ 0 & <1 \tiny $\pm$ 0 & - & <1 \tiny $\pm$ 0 & 1 \tiny $\pm$ 0 \\
MiniBooNE & 1 \tiny $\pm$ 0 & 213 \tiny $\pm$ 48 & 1 \tiny $\pm$ 0 & - & <1 \tiny $\pm$ 0 & 12 \tiny $\pm$ 1 \\
bank-marketing & <1 \tiny $\pm$ 0 & <1 \tiny $\pm$ 0 & <1 \tiny $\pm$ 0 & - & <1 \tiny $\pm$ 0 & <1 \tiny $\pm$ 0 \\
california & <1 \tiny $\pm$ 0 & 1 \tiny $\pm$ 0 & <1 \tiny $\pm$ 0 & - & <1 \tiny $\pm$ 0 & 1 \tiny $\pm$ 0 \\
covertype & 1 \tiny $\pm$ 0 & 9 \tiny $\pm$ 0 & 1 \tiny $\pm$ 0 & - & 4 \tiny $\pm$ 0 & 24 \tiny $\pm$ 0 \\
credit & <1 \tiny $\pm$ 0 & <1 \tiny $\pm$ 0 & <1 \tiny $\pm$ 0 & - & <1 \tiny $\pm$ 0 & 1 \tiny $\pm$ 0 \\
default-of-credit. & <1 \tiny $\pm$ 0 & 1 \tiny $\pm$ 0 & <1 \tiny $\pm$ 0 & - & <1 \tiny $\pm$ 0 & 1 \tiny $\pm$ 0 \\
electricity & <1 \tiny $\pm$ 0 & <1 \tiny $\pm$ 0 & <1 \tiny $\pm$ 0 & - & <1 \tiny $\pm$ 0 & 1 \tiny $\pm$ 0 \\
eye\_movements & <1 \tiny $\pm$ 0 & <1 \tiny $\pm$ 0 & <1 \tiny $\pm$ 0 & - & <1 \tiny $\pm$ 0 & 1 \tiny $\pm$ 0 \\
heloc & <1 \tiny $\pm$ 0 & <1 \tiny $\pm$ 0 & <1 \tiny $\pm$ 0 & - & <1 \tiny $\pm$ 0 & 1 \tiny $\pm$ 0 \\
house\_16H & <1 \tiny $\pm$ 0 & 2 \tiny $\pm$ 0 & <1 \tiny $\pm$ 0 & - & <1 \tiny $\pm$ 0 & 1 \tiny $\pm$ 0 \\
jannis & 1 \tiny $\pm$ 0 & 98 \tiny $\pm$ 1 & 1 \tiny $\pm$ 0 & - & <1 \tiny $\pm$ 0 & 9 \tiny $\pm$ 0 \\
pol & <1 \tiny $\pm$ 0 & <1 \tiny $\pm$ 0 & <1 \tiny $\pm$ 0 & - & <1 \tiny $\pm$ 0 & 1 \tiny $\pm$ 0 \\ \midrule
\multicolumn{7}{c}{Numerical \& categorical data} \\ \midrule
albert & <1 \tiny $\pm$ 0 & 2 \tiny $\pm$ 0 & 1 \tiny $\pm$ 0 & <1 \tiny $\pm$ 0 & <1 \tiny $\pm$ 0 & 5 \tiny $\pm$ 0 \\
compas-two-years & <1 \tiny $\pm$ 0 & <1 \tiny $\pm$ 0 & <1 \tiny $\pm$ 0 & <1 \tiny $\pm$ 0 & <1 \tiny $\pm$ 0 & <1 \tiny $\pm$ 0 \\
covertype & 1 \tiny $\pm$ 0 & 13 \tiny $\pm$ 1 & 1 \tiny $\pm$ 0 & 1 \tiny $\pm$ 0 & 3 \tiny $\pm$ 0 & 21 \tiny $\pm$ 1 \\
default-of-credit. & <1 \tiny $\pm$ 0 & 1 \tiny $\pm$ 0 & <1 \tiny $\pm$ 0 & <1 \tiny $\pm$ 0 & <1 \tiny $\pm$ 0 & 1 \tiny $\pm$ 0 \\
electricity & <1 \tiny $\pm$ 0 & <1 \tiny $\pm$ 0 & <1 \tiny $\pm$ 0 & <1 \tiny $\pm$ 0 & <1 \tiny $\pm$ 0 & 1 \tiny $\pm$ 0 \\
eye\_movements & <1 \tiny $\pm$ 0 & <1 \tiny $\pm$ 0 & <1 \tiny $\pm$ 0 & <1 \tiny $\pm$ 0 & <1 \tiny $\pm$ 0 & 1 \tiny $\pm$ 0 \\
road-safety & 1 \tiny $\pm$ 0 & 50 \tiny $\pm$ 3 & 1 \tiny $\pm$ 0 & <1 \tiny $\pm$ 0 & 1 \tiny $\pm$ 0 & 14 \tiny $\pm$ 0 \\ \midrule
\multicolumn{7}{c}{UCI datasets (numerical \& categorical)} \\ \midrule
adult & <1 \tiny $\pm$ 0 & 1 \tiny $\pm$ 0 & <1 \tiny $\pm$ 0 & <1 \tiny $\pm$ 0 & <1 \tiny $\pm$ 0 & 1 \tiny $\pm$ 0 \\
breast-w & <1 \tiny $\pm$ 0 & <1 \tiny $\pm$ 0 & <1 \tiny $\pm$ 0 & - & <1 \tiny $\pm$ 0 & <1 \tiny $\pm$ 0 \\
diabetes & <1 \tiny $\pm$ 0 & <1 \tiny $\pm$ 0 & <1 \tiny $\pm$ 0 & - & <1 \tiny $\pm$ 0 & <1 \tiny $\pm$ 0 \\
mushroom & <1 \tiny $\pm$ 0 & <1 \tiny $\pm$ 0 & <1 \tiny $\pm$ 0 & <1 \tiny $\pm$ 0 & <1 \tiny $\pm$ 0 & <1 \tiny $\pm$ 0 \\
nursery & <1 \tiny $\pm$ 0 & <1 \tiny $\pm$ 0 & <1 \tiny $\pm$ 0 & <1 \tiny $\pm$ 0 & <1 \tiny $\pm$ 0 & <1 \tiny $\pm$ 0 \\
vote & <1 \tiny $\pm$ 0 & <1 \tiny $\pm$ 0 & <1 \tiny $\pm$ 0 & <1 \tiny $\pm$ 0 & <1 \tiny $\pm$ 0 & <1 \tiny $\pm$ 0 \\ \bottomrule
\end{tabular}
\end{table*}
\endgroup

\subsection{Test accuracy with different privacy budgets}

In the main text we displayed results for depth 4 trees with a privacy budget of $\epsilon = 0.1$. Although this is generally considered as a good value for privacy, we also display results for $\epsilon = 0.01$ and $\epsilon = 1$ in Tables \ref{tab:performance-comparison-001} and \ref{tab:performance-comparison-1} respectively.

\begingroup
\renewcommand{\arraystretch}{1.11}

\begin{table*}[tb]
\setlength{\tabcolsep}{3.5pt}
\centering
\caption{5-fold cross-validated mean test accuracy scores and standard errors at $\epsilon {=} 0.01$ for trees of depth 4. PrivaTree* ran without private quantile computation, DPGDF only ran on categorical features.}
\label{tab:performance-comparison-001}
\begin{tabular}{l|c|cc|ccc}
\toprule
\textbf{OpenML dataset} & \textbf{decision tree} & \textbf{BDPT} & \textbf{PrivaTree*} & \textbf{DPGDF} & \textbf{DiffPrivLib} & \textbf{PrivaTree} \\ 
 & no privacy & \multicolumn{2}{c|}{leaking numerical splits} & \multicolumn{3}{c}{differential privacy} \\ \midrule \multicolumn{7}{c}{Numerical data} \\ \midrule
Bioresponse & .701 \tiny $\pm$ .009 & .497 \tiny $\pm$ .001 & \textbf{.515} \tiny $\pm$ .012 & - & .510 \tiny $\pm$ .014 & \textbf{.531} \tiny $\pm$ .022 \\
Diabetes130US & .606 \tiny $\pm$ .002 & .505 \tiny $\pm$ .003 & \textbf{.575} \tiny $\pm$ .007 & - & \textbf{.547} \tiny $\pm$ .012 & .530 \tiny $\pm$ .010 \\
Higgs & .657 \tiny $\pm$ .001 & timeout & \textbf{.647} \tiny $\pm$ .001 & - & .506 \tiny $\pm$ .001 & \textbf{.648} \tiny $\pm$ .000 \\
MagicTelescope & .783 \tiny $\pm$ .008 & .500 \tiny $\pm$ .000 & \textbf{.628} \tiny $\pm$ .030 & - & .637 \tiny $\pm$ .023 & \textbf{.640} \tiny $\pm$ .036 \\
MiniBooNE & .872 \tiny $\pm$ .001 & .500 \tiny $\pm$ .000 & \textbf{.797} \tiny $\pm$ .007 & - & .506 \tiny $\pm$ .005 & \textbf{.761} \tiny $\pm$ .014 \\
bank-marketing & .768 \tiny $\pm$ .004 & .500 \tiny $\pm$ .001 & \textbf{.656} \tiny $\pm$ .025 & - & \textbf{.592} \tiny $\pm$ .034 & .526 \tiny $\pm$ .019 \\
california & .783 \tiny $\pm$ .002 & .500 \tiny $\pm$ .000 & \textbf{.670} \tiny $\pm$ .024 & - & .552 \tiny $\pm$ .047 & \textbf{.662} \tiny $\pm$ .041 \\
covertype & .741 \tiny $\pm$ .001 & .500 \tiny $\pm$ .000 & \textbf{.736} \tiny $\pm$ .002 & - & .535 \tiny $\pm$ .002 & \textbf{.730} \tiny $\pm$ .001 \\
credit & .748 \tiny $\pm$ .001 & .500 \tiny $\pm$ .000 & \textbf{.693} \tiny $\pm$ .027 & - & .519 \tiny $\pm$ .010 & \textbf{.558} \tiny $\pm$ .031 \\
default-of-credit. & .700 \tiny $\pm$ .006 & .500 \tiny $\pm$ .000 & \textbf{.596} \tiny $\pm$ .018 & - & .516 \tiny $\pm$ .011 & \textbf{.541} \tiny $\pm$ .025 \\
electricity & .731 \tiny $\pm$ .001 & .500 \tiny $\pm$ .000 & \textbf{.699} \tiny $\pm$ .014 & - & .616 \tiny $\pm$ .022 & \textbf{.658} \tiny $\pm$ .017 \\
eye\_movements & .571 \tiny $\pm$ .010 & .500 \tiny $\pm$ .000 & \textbf{.504} \tiny $\pm$ .010 & - & \textbf{.516} \tiny $\pm$ .010 & .511 \tiny $\pm$ .007 \\
heloc & .702 \tiny $\pm$ .004 & .509 \tiny $\pm$ .005 & \textbf{.581} \tiny $\pm$ .014 & - & .534 \tiny $\pm$ .023 & \textbf{.564} \tiny $\pm$ .019 \\
house\_16H & .819 \tiny $\pm$ .003 & .500 \tiny $\pm$ .000 & \textbf{.656} \tiny $\pm$ .019 & - & .523 \tiny $\pm$ .017 & \textbf{.664} \tiny $\pm$ .010 \\
jannis & .718 \tiny $\pm$ .001 & .500 \tiny $\pm$ .000 & \textbf{.654} \tiny $\pm$ .009 & - & .565 \tiny $\pm$ .012 & \textbf{.638} \tiny $\pm$ .012 \\
pol & .930 \tiny $\pm$ .001 & .499 \tiny $\pm$ .003 & \textbf{.567} \tiny $\pm$ .008 & - & \textbf{.570} \tiny $\pm$ .007 & .535 \tiny $\pm$ .016 \\ \midrule
\multicolumn{7}{c}{Numerical \& categorical data} \\ \midrule
albert & .641 \tiny $\pm$ .002 & .500 \tiny $\pm$ .000 & \textbf{.599} \tiny $\pm$ .008 & .500 \tiny $\pm$ .001 & .504 \tiny $\pm$ .004 & \textbf{.542} \tiny $\pm$ .007 \\
compas-two-years & .664 \tiny $\pm$ .005 & .503 \tiny $\pm$ .005 & \textbf{.528} \tiny $\pm$ .018 & .495 \tiny $\pm$ .023 & \textbf{.537} \tiny $\pm$ .014 & .520 \tiny $\pm$ .022 \\
covertype & .756 \tiny $\pm$ .001 & .507 \tiny $\pm$ .004 & \textbf{.746} \tiny $\pm$ .002 & .534 \tiny $\pm$ .016 & .533 \tiny $\pm$ .006 & \textbf{.743} \tiny $\pm$ .002 \\
default-of-credit. & .704 \tiny $\pm$ .005 & .500 \tiny $\pm$ .000 & \textbf{.621} \tiny $\pm$ .023 & .507 \tiny $\pm$ .014 & \textbf{.533} \tiny $\pm$ .010 & .520 \tiny $\pm$ .009 \\
electricity & .732 \tiny $\pm$ .002 & .500 \tiny $\pm$ .000 & \textbf{.686} \tiny $\pm$ .018 & .499 \tiny $\pm$ .006 & .587 \tiny $\pm$ .023 & \textbf{.650} \tiny $\pm$ .023 \\
eye\_movements & .570 \tiny $\pm$ .003 & .500 \tiny $\pm$ .000 & \textbf{.511} \tiny $\pm$ .017 & .483 \tiny $\pm$ .010 & \textbf{.513} \tiny $\pm$ .007 & .507 \tiny $\pm$ .009 \\
road-safety & .728 \tiny $\pm$ .001 & .500 \tiny $\pm$ .000 & \textbf{.689} \tiny $\pm$ .002 & .685 \tiny $\pm$ .002 & .548 \tiny $\pm$ .036 & \textbf{.688} \tiny $\pm$ .002 \\ \midrule
\multicolumn{7}{c}{UCI datasets (numerical \& categorical)} \\ \midrule
adult & .840 \tiny $\pm$ .001 & .747 \tiny $\pm$ .004 & \textbf{.777} \tiny $\pm$ .007 & .742 \tiny $\pm$ .004 & .753 \tiny $\pm$ .002 & \textbf{.771} \tiny $\pm$ .010 \\
breast-w & .950 \tiny $\pm$ .007 & .502 \tiny $\pm$ .061 & \textbf{.824} \tiny $\pm$ .038 & - & \textbf{.690} \tiny $\pm$ .108 & .331 \tiny $\pm$ .143 \\
diabetes & .723 \tiny $\pm$ .012 & \textbf{.660} \tiny $\pm$ .010 & .514 \tiny $\pm$ .055 & - & \textbf{.581} \tiny $\pm$ .051 & .513 \tiny $\pm$ .086 \\
mushroom & .977 \tiny $\pm$ .005 & .640 \tiny $\pm$ .009 & \textbf{.778} \tiny $\pm$ .040 & .654 \tiny $\pm$ .056 & .752 \tiny $\pm$ .051 & \textbf{.784} \tiny $\pm$ .064 \\
nursery & 1.000 \tiny $\pm$ .000 & .623 \tiny $\pm$ .033 & \textbf{.949} \tiny $\pm$ .031 & .564 \tiny $\pm$ .036 & .688 \tiny $\pm$ .030 & \textbf{1.000} \tiny $\pm$ .000 \\
vote & .944 \tiny $\pm$ .014 & \textbf{.560} \tiny $\pm$ .083 & .535 \tiny $\pm$ .028 & .464 \tiny $\pm$ .105 & .531 \tiny $\pm$ .060 & \textbf{.608} \tiny $\pm$ .158 \\ \bottomrule
\end{tabular}
\end{table*}
\endgroup

\begingroup
\renewcommand{\arraystretch}{1.11}

\begin{table*}[tb]
\setlength{\tabcolsep}{3.5pt}
\centering
\caption{5-fold cross-validated mean test accuracy scores and standard errors at $\epsilon {=} 1$ for trees of depth 4. PrivaTree* ran without private quantile computation, DPGDF only ran on categorical features.}
\label{tab:performance-comparison-1}
\begin{tabular}{l|c|cc|ccc}
\toprule
\textbf{OpenML dataset} & \textbf{decision tree} & \textbf{BDPT} & \textbf{PrivaTree*} & \textbf{DPGDF} & \textbf{DiffPrivLib} & \textbf{PrivaTree} \\ 
 & no privacy & \multicolumn{2}{c|}{leaking numerical splits} & \multicolumn{3}{c}{differential privacy} \\ \midrule \multicolumn{7}{c}{Numerical data} \\ \midrule
Bioresponse & .701 \tiny $\pm$ .009 & .499 \tiny $\pm$ .001 & \textbf{.699} \tiny $\pm$ .009 & - & .514 \tiny $\pm$ .007 & \textbf{.683} \tiny $\pm$ .006 \\
Diabetes130US & .606 \tiny $\pm$ .002 & .545 \tiny $\pm$ .001 & \textbf{.605} \tiny $\pm$ .002 & - & .526 \tiny $\pm$ .009 & \textbf{.605} \tiny $\pm$ .002 \\
Higgs & .657 \tiny $\pm$ .001 & timeout & \textbf{.660} \tiny $\pm$ .000 & - & .504 \tiny $\pm$ .002 & \textbf{.658} \tiny $\pm$ .002 \\
MagicTelescope & .783 \tiny $\pm$ .008 & .500 \tiny $\pm$ .000 & \textbf{.779} \tiny $\pm$ .006 & - & .606 \tiny $\pm$ .038 & \textbf{.783} \tiny $\pm$ .005 \\
MiniBooNE & .872 \tiny $\pm$ .001 & .601 \tiny $\pm$ .002 & \textbf{.869} \tiny $\pm$ .001 & - & .502 \tiny $\pm$ .000 & \textbf{.868} \tiny $\pm$ .001 \\
bank-marketing & .768 \tiny $\pm$ .004 & .600 \tiny $\pm$ .003 & \textbf{.763} \tiny $\pm$ .004 & - & .523 \tiny $\pm$ .006 & \textbf{.765} \tiny $\pm$ .003 \\
california & .783 \tiny $\pm$ .002 & .501 \tiny $\pm$ .001 & \textbf{.780} \tiny $\pm$ .003 & - & .553 \tiny $\pm$ .024 & \textbf{.782} \tiny $\pm$ .005 \\
covertype & .741 \tiny $\pm$ .001 & .529 \tiny $\pm$ .001 & \textbf{.747} \tiny $\pm$ .001 & - & .532 \tiny $\pm$ .006 & \textbf{.746} \tiny $\pm$ .001 \\
credit & .748 \tiny $\pm$ .001 & .557 \tiny $\pm$ .002 & \textbf{.747} \tiny $\pm$ .002 & - & .535 \tiny $\pm$ .017 & \textbf{.751} \tiny $\pm$ .003 \\
default-of-credit. & .700 \tiny $\pm$ .006 & .516 \tiny $\pm$ .016 & \textbf{.699} \tiny $\pm$ .006 & - & .529 \tiny $\pm$ .014 & \textbf{.696} \tiny $\pm$ .006 \\
electricity & .731 \tiny $\pm$ .001 & .606 \tiny $\pm$ .009 & \textbf{.742} \tiny $\pm$ .001 & - & .604 \tiny $\pm$ .037 & \textbf{.740} \tiny $\pm$ .002 \\
eye\_movements & .571 \tiny $\pm$ .010 & .500 \tiny $\pm$ .000 & \textbf{.577} \tiny $\pm$ .010 & - & .533 \tiny $\pm$ .014 & \textbf{.581} \tiny $\pm$ .007 \\
heloc & .702 \tiny $\pm$ .004 & .628 \tiny $\pm$ .004 & \textbf{.700} \tiny $\pm$ .006 & - & .604 \tiny $\pm$ .020 & \textbf{.701} \tiny $\pm$ .006 \\
house\_16H & .819 \tiny $\pm$ .003 & .695 \tiny $\pm$ .011 & \textbf{.817} \tiny $\pm$ .003 & - & .595 \tiny $\pm$ .031 & \textbf{.805} \tiny $\pm$ .005 \\
jannis & .718 \tiny $\pm$ .001 & .632 \tiny $\pm$ .002 & \textbf{.717} \tiny $\pm$ .004 & - & .559 \tiny $\pm$ .015 & \textbf{.716} \tiny $\pm$ .003 \\
pol & .930 \tiny $\pm$ .001 & .638 \tiny $\pm$ .014 & \textbf{.912} \tiny $\pm$ .004 & - & .551 \tiny $\pm$ .016 & \textbf{.905} \tiny $\pm$ .006 \\ \midrule
\multicolumn{7}{c}{Numerical \& categorical data} \\ \midrule
albert & .641 \tiny $\pm$ .002 & .633 \tiny $\pm$ .002 & \textbf{.639} \tiny $\pm$ .003 & .512 \tiny $\pm$ .004 & .506 \tiny $\pm$ .003 & \textbf{.638} \tiny $\pm$ .002 \\
compas-two-years & .664 \tiny $\pm$ .005 & .589 \tiny $\pm$ .015 & \textbf{.674} \tiny $\pm$ .006 & .582 \tiny $\pm$ .007 & .569 \tiny $\pm$ .007 & \textbf{.656} \tiny $\pm$ .006 \\
covertype & .756 \tiny $\pm$ .001 & .614 \tiny $\pm$ .000 & \textbf{.756} \tiny $\pm$ .001 & .516 \tiny $\pm$ .005 & .536 \tiny $\pm$ .007 & \textbf{.753} \tiny $\pm$ .001 \\
default-of-credit. & .704 \tiny $\pm$ .005 & .500 \tiny $\pm$ .000 & \textbf{.698} \tiny $\pm$ .006 & .528 \tiny $\pm$ .004 & .607 \tiny $\pm$ .026 & \textbf{.696} \tiny $\pm$ .004 \\
electricity & .732 \tiny $\pm$ .002 & .612 \tiny $\pm$ .002 & \textbf{.741} \tiny $\pm$ .002 & .518 \tiny $\pm$ .004 & .579 \tiny $\pm$ .020 & \textbf{.741} \tiny $\pm$ .002 \\
eye\_movements & .570 \tiny $\pm$ .003 & .499 \tiny $\pm$ .001 & \textbf{.575} \tiny $\pm$ .004 & .530 \tiny $\pm$ .005 & .519 \tiny $\pm$ .005 & \textbf{.573} \tiny $\pm$ .007 \\
road-safety & .728 \tiny $\pm$ .001 & .601 \tiny $\pm$ .057 & \textbf{.709} \tiny $\pm$ .003 & .642 \tiny $\pm$ .025 & .517 \tiny $\pm$ .007 & \textbf{.722} \tiny $\pm$ .004 \\ \midrule
\multicolumn{7}{c}{UCI datasets (numerical \& categorical)} \\ \midrule
adult & .840 \tiny $\pm$ .001 & .809 \tiny $\pm$ .001 & \textbf{.822} \tiny $\pm$ .003 & .752 \tiny $\pm$ .000 & .758 \tiny $\pm$ .005 & \textbf{.823} \tiny $\pm$ .001 \\
breast-w & .950 \tiny $\pm$ .007 & .559 \tiny $\pm$ .020 & \textbf{.930} \tiny $\pm$ .012 & - & .917 \tiny $\pm$ .026 & \textbf{.946} \tiny $\pm$ .008 \\
diabetes & .723 \tiny $\pm$ .012 & .645 \tiny $\pm$ .006 & \textbf{.741} \tiny $\pm$ .012 & - & .683 \tiny $\pm$ .024 & \textbf{.706} \tiny $\pm$ .028 \\
mushroom & .977 \tiny $\pm$ .005 & .950 \tiny $\pm$ .008 & \textbf{.998} \tiny $\pm$ .001 & .728 \tiny $\pm$ .048 & .749 \tiny $\pm$ .044 & \textbf{.999} \tiny $\pm$ .001 \\
nursery & 1.000 \tiny $\pm$ .000 & \textbf{1.000} \tiny $\pm$ .000 & \textbf{1.000} \tiny $\pm$ .000 & .690 \tiny $\pm$ .016 & .715 \tiny $\pm$ .035 & \textbf{1.000} \tiny $\pm$ .000 \\
vote & .944 \tiny $\pm$ .014 & .901 \tiny $\pm$ .021 & \textbf{.957} \tiny $\pm$ .012 & .802 \tiny $\pm$ .049 & .836 \tiny $\pm$ .021 & \textbf{.944} \tiny $\pm$ .022 \\ \bottomrule
\end{tabular}
\end{table*}
\endgroup

\subsection{Poisoning robustness on categorical data}

In the main text, we showed a comparison between the poisoning robustness guarantees of PrivaTree and private logistic regression on numerical datasets. In Table \ref{tab:poison-performance-appendix} we show results on data with categorical features encoded as integers.

\begin{table*}[tb]
\centering
\caption{5-fold cross-validated mean test accuracy and poisoning accuracy guarantee against a percentage of poisoned samples on mixed numerical/categorical datasets. Stronger privacy provides stronger poisoning robustness but comes at the cost of clean dataset accuracy. Since \textit{vote} and \textit{diabetes} do not have enough samples, we do not the compute 0.1\% guarantee.}
\label{tab:poison-performance-appendix}

\begin{tabular}{@{}lllrrrr@{}}
\toprule
\textbf{dataset} & \textbf{method} & \textbf{$\epsilon$} & \textbf{accuracy} & \textbf{0.1\% guarantee} & \textbf{0.5\% guarantee} & \textbf{1\% guarantee} \\ \midrule
\multicolumn{7}{c}{Numerical \& categorical data} \\ \midrule
albert & PrivaTree & .01 & .53 & \textbf{.34} & \textbf{.05} & \textbf{.01} \\
&  & .1 & \textbf{.63} & .01 & .00 & .00 \\
& DiffPrivLib LR & .01 & .49 & .31 & .05 & .00 \\
&  & .1 & .52 & .01 & .00 & .00 \\ \midrule
compas-two-years & PrivaTree & .01 & .53 & \textbf{.51} & \textbf{.44} & \textbf{.36} \\
&  & .1 & \textbf{.61} & .46 & .09 & .01 \\
& DiffPrivLib LR & .01 & .50 & .48 & .41 & .34 \\
&  & .1 & .47 & .35 & .07 & .01 \\ \midrule
covertype & PrivaTree & .01 & .74 & \textbf{.03} & .00 & .00 \\
&  & .1 & \textbf{.75} & .00 & .00 & .00 \\
& DiffPrivLib LR & .01 & .54 & .02 & .00 & .00 \\
&  & .1 & .65 & .00 & .00 & .00 \\ \midrule
default-of-credit-card-clients & PrivaTree & .01 & .56 & \textbf{.50} & \textbf{.33} & \textbf{.19} \\
&  & .1 & \textbf{.67} & .25 & .00 & .00 \\
& DiffPrivLib LR & .01 & .50 & .45 & .3 & .17 \\
&  & .1 & .56 & .21 & .00 & .00 \\ \midrule
electricity & PrivaTree & .01 & .64 & \textbf{.48} & \textbf{.14} & \textbf{.03} \\
&  & .1 & \textbf{.74} & .04 & .00 & .00 \\
& DiffPrivLib LR & .01 & .52 & .38 & .11 & .02 \\
&  & .1 & .52 & .03 & .00 & .00 \\ \midrule
eye\_movements & PrivaTree & .01 & .50 & \textbf{.47} & \textbf{.37} & \textbf{.28} \\
&  & .1 & \textbf{.51} & .28 & .03 & .00 \\
& DiffPrivLib LR & .01 & .50 & .47 & .37 & .27 \\
&  & .1 & .49 & .27 & .02 & .00 \\ \midrule
road-safety & PrivaTree & .01 & .69 & \textbf{.28} & \textbf{.01} & .00 \\
&  & .1 & \textbf{.72} & .00 & .00 & .00 \\
& DiffPrivLib LR & .01 & .52 & .21 & .01 & .00 \\
&  & .1 & .56 & .00 & .00 & .00 \\ \midrule
\multicolumn{7}{c}{UCI datasets (numerical \& categorical)} \\ \midrule
adult & PrivaTree & .01 & .77 & \textbf{.54} & \textbf{.13} & \textbf{.02} \\
 &  & .1 & \textbf{.82} & .02 & .00 & .00 \\
 & DiffPrivLib LR & .01 & .58 & .40 & .10 & .02 \\
 &  & .1 & .76 & .02 & .00 & .00 \\ \midrule
breast-w & PrivaTree & .01 & .69 & .69 & .68 & \textbf{.66} \\
 &  & .1 & \textbf{.89} & \textbf{.89} & \textbf{.73} & .54 \\
 & DiffPrivLib LR & .01 & .45 & .45 & .44 & .43 \\
 &  & .1 & .73 & .73 & .60 & .44 \\ \midrule
diabetes & PrivaTree & .01 & .63 & - & \textbf{.61} & \textbf{.59} \\
 &  & .1 & \textbf{.66} & - & .49 & .36 \\
 & DiffPrivLib LR & .01 & .55 & - & .53 & .52 \\
 &  & .1 & .62 & - & .46 & .34 \\ \midrule
mushroom & PrivaTree & .01 & .87 & \textbf{.83} & \textbf{.70} & \textbf{.55} \\
 &  & .1 & \textbf{.98} & .66 & .11 & .01 \\
 & DiffPrivLib LR & .01 & .48 & .46 & .39 & .31 \\
 &  & .1 & .59 & .39 & .07 & .01 \\ \midrule
nursery & PrivaTree & .01 & \textbf{1.00} & \textbf{.90} & \textbf{.60} & \textbf{.36} \\
 &  & .1 & \textbf{1.00} & .37 & .01 & .00 \\
 & DiffPrivLib LR & .01 & .61 & .55 & .37 & .22 \\
 &  & .1 & .95 & .35 & .01 & .00 \\ \midrule
vote & PrivaTree & .01 & .57 & - & .57 & .57 \\
 &  & .1 & \textbf{.75} & - & \textbf{.75} & \textbf{.68} \\
 & DiffPrivLib LR & .01 & .21 & - & .21 & .21 \\
 &  & .1 & .55 & - & .55 & .50 \\ \bottomrule
\end{tabular}
\end{table*}

\end{document}